\def\eqref#1{equation~\ref{#1}}
\def\1{\bm{1}}
\def\eps{{\varepsilon}}
\DeclareMathAlphabet{\mathsfit}{\encodingdefault}{\sfdefault}{m}{sl}
\SetMathAlphabet{\mathsfit}{bold}{\encodingdefault}{\sfdefault}{bx}{n}
\def\sN{{\mathbb{N}}}
\def\sP{{\mathbb{P}}}
\def\sR{{\mathbb{R}}}
\newcommand{\E}{\mathbb{E}}
\DeclareMathOperator*{\argmax}{arg\,max}
\crefname{ineq}{Inequality}{Inequalities}
\newtheorem{definition}{Definition}[section]
\newtheorem{theorem}{Theorem}[section]
\newtheorem{lemma}{Lemma}[section]
\newtheorem{example}{Example}[section]
\newtheorem{fact}{Fact}[section]
\newtheorem{corollary}{Corollary}[section]
\newtheorem{claim}{Claim}[section]
\newtheorem{remark}{Remark}[section]
\def\+#1{\mathcal{#1}}
\def\-#1{\mathbb{#1}}
\newcommand{\notshow}[1]{{}}
\newcommand{\AutoAdjust}[3]{{ \mathchoice{ \left #1 #2  \right #3}{#1 #2 #3}{#1 #2 #3}{#1 #2 #3} }}
\newcommand{\Xcomment}[1]{{}}
\newcommand{\InParentheses}[1]{\AutoAdjust{(}{#1}{)}}
\newcommand{\InBrackets}[1]{\AutoAdjust{[}{#1}{]}}
\newcommand{\InAbs}[1]{\AutoAdjust{|}{#1}{|}}
\renewcommand{\part}[2]{\frac{\partial #1}{\partial #2}}
\DeclareMathOperator{\QUERYCPLX}{QC}
\DeclareMathOperator{\QC}{QC}
\title{Learning Thresholds with Latent Values and Censored Feedback}
\author[1]{Jiahao Zhang}
\author[2]{Tao Lin}
\author[3]{Weiqiang Zheng}
\author[4]{Zhe Feng}
\author[4]{Yifeng Teng}
\author[1]{Xiaotie Deng}
\affil[1]{Peking University, \{jiahao.zhang, xiaotie\}@pku.edu.cn}
\affil[2]{Harvard University, tlin@g.harvard.edu}
\affil[3]{Yale University, weiqiang.zheng@yale.edu}
\affil[4]{Google Research, \{zhef, yifengt\}@google.com}
\begin{document}
\maketitle

\begin{abstract}
In this paper, we investigate a problem of \emph{actively} learning threshold in latent space, where the \emph{unknown} reward $g(\gamma, v)$ depends on the proposed threshold $\gamma$ and latent value $v$ and it can be \emph{only} achieved if the threshold is lower than or equal to the \emph{unknown} latent value. 
This problem has broad applications in practical scenarios, e.g., reserve price optimization in online auctions, online task assignments in crowdsourcing, setting recruiting bars in hiring, etc.
We first characterize the query complexity of learning a threshold with the expected reward at most $\eps$ smaller than the optimum and prove that the number of queries needed can be infinitely large even when $g(\gamma, v)$ is monotone with respect to both $\gamma$ and $v$. On the positive side, we provide a tight query complexity $\tilde{\Theta}(1/\eps^3)$ when $g$ is monotone and the CDF of value distribution is Lipschitz. Moreover, we show a tight $\tilde{\Theta}(1/\eps^3)$ query complexity can be achieved as long as $g$ satisfies right Lipschitzness, which provides a complete characterization for this problem. Finally, we extend this model to an online learning setting and demonstrate a tight $\Theta(T^{2/3})$ regret bound using continuous-arm bandit techniques and the aforementioned query complexity results.

\end{abstract}

\section{Introduction}
The thresholding strategy is widely used in practice, e.g., setting bars in the hiring process, setting reserve prices in online auctions, and setting requirements for online tasks in crowdsourcing, due to its intrinsic simplicity and transparency. In addition, in the practical scenarios mentioned above, the threshold can be only set in a \emph{latent} space, which makes the problem more challenging:
\begin{itemize}[leftmargin=*]
\item In hiring, the recruiter wants to set a recruiting bar that maximizes the quality of the hires without knowing the true qualifications of each candidate.
\item In online auctions, the seller wants to set a reserve price that maximizes their revenue. However, the seller does not know the true value of the item being auctioned.
\item In crowdsourcing, the taskmaster wants to set a requirement (e.g., the number of questions that need to be answered) for each task so that the tasks can be assigned to qualified workers who have a strong willingness to complete the task, but has no access to the willingness of each worker. The target of the taskmaster is to collect high-quality data as much as possible.
\end{itemize}

Although the latent value cannot be observed, it will affect the reward jointly with the threshold. In the examples above, the reward is the quality of the hire, revenue from the auction,  or the quality of the collected data. In some cases, the reward only depends on the latent value as long as the value is larger than the threshold, e.g., the quality of the hire may only depend on the intrinsic qualification and skill level of the candidate as long as she passes the recruiting bar, the revenue in second price auction only depends on the value since the bidders will report truthfully regardless of the reserve price. However, the threshold does play an important role in other settings, e.g., the winning bid in the first price auction relies on and increases with the reserve price in general~\cite{auction-theory-book} thus the revenue highly depends on both reserve price and latent value. Similarly, in crowdsourcing, the requirements of the tasks directly affect the quality of data collected by the taskmaster. In this work, we consider a general framework where the reward can depend on both threshold and latent value. 

Another difficulty in setting a proper threshold is to balance the per-entry reward and capability. For example, in hiring, setting a higher recruiting bar can guarantee the qualification of each candidate and thus the return to the company.  However, a too-high bar may reject all candidates and cannot fit the position needs. To balance this tradeoff, the threshold needs to be set appropriately.

\subsection{Our Model and Contribution}
\paragraph{An informal version of our model} In this work, we propose a general active learning abstraction for the aforementioned thresholding problem. We first provide some notations of the model considered in this paper to facilitate the presentation. Let $g(\gamma, v)$ be the reward function that maps the threshold $\gamma\in [0, 1]$ and latent value $v\in [0, 1]$ to the reward, where $g(\gamma, v)$ can be observed if and only if $\gamma \leq v$. We assume that the latent value follows an \emph{unknown} distribution and we can only observe the reward (as long as $\gamma \leq v$) but not the value itself. We investigate the query complexity of learning the optimal threshold in the latent space: the number of queries that are needed to learn a threshold with an expected reward at most $\eps$ smaller than the optimum.  

\paragraph{Our contributions}
The first contribution in this work is an impossibility result for this active learning problem.
We prove that the query complexity can be infinitely large \emph{even} when the reward function is monotone with respect to the threshold and the value. Our technique is built upon the idea of ``needle in the haystack". Intuitively, a higher threshold $\gamma$ gives a higher reward $g(\gamma, v)$ because of the monotonicity but decreases the probability of getting a reward which can only be achieved if $\gamma \le v$. This tradeoff allows us to construct an interval that has an equal expected utility. We find that if the reward function has a discontinuous bump at some point in the interval and the value distribution has a mass exactly at the same point, then the expected utility at this point will be constantly higher than the equal expected utility. Therefore we can hide the unique maximum (``needle'') in an equal utility interval (``haystack''), which makes the learner need infinite queries to learn the optimal threshold.

Our second contribution is a series of positive results with \emph{tight} query complexity up to logarithmic factors. We consider two special cases with common and minor assumptions: (1) the reward function is monotone and the CDF of value distribution is Lipschitz and (2) the reward function is right-Lipschitz. With each of the two assumptions,
we can apply a discretization technique to prove an $\tilde O(\frac{1}{\eps^3})$ upper bound on the query complexity of the threshold learning problem. 
We also give a matching lower bound, which is technically more challenging: at least $\Omega(\frac{1}{\eps^3})$ queries are needed to find an $\eps$-optimal threshold, even if the reward function is both monotone and Lipschitz and the value distribution CDF is Lipschitz.
%
Previous papers like \cite{kleinberg2003value,leme2023pricing} do not require the value distribution to be Lipschitz, which makes it much easier to construct a value distribution with point mass to prove lower bounds. To prove our lower bound with strong constraints on the reward function and distribution, we provide a novel construction of value distribution by careful perturbation of a smooth distribution. 
We summarize our main results in Table~\ref{tab:my-table}.

Finally, we extend the threshold learning problem to an online learning setting. Relating this problem to continuous-armed Lipschitz bandit, and using the aforementioned query complexity lower bound, we show a tight $\Theta(T^{2/3})$ regret bound for the online learning problem. 
\begin{table}[ht!]
\centering
\caption{Our results on the query complexity of learning optimal threshold. Rows correspond to reward functions and columns to value distribution. $\tilde{O}(\cdot)$ ignores poly-logarithmic factors.}
\label{tab:my-table}
\begin{center}
\begin{tabular}{|c|cc|cc|}
\hline
\multirow{2}{*}{Reward / Value} & \multicolumn{2}{c|}{Lipschitz}                                                                                                                                                                                                                              & \multicolumn{2}{c|}{General}                                                                                                                                                                           \\ \cline{2-5} 
& \multicolumn{1}{c|}{lower bound}                                                                                               & upper bound                                                                                                             & \multicolumn{1}{c|}{lower bound}                                                                              & upper bound                                                                            \\ \hline
Monotone                        & \multicolumn{1}{c|}{\multirow{2}{*}{\begin{tabular}[c]{@{}c@{}}$\Omega(\frac{1}{\varepsilon^3})$\\ \Cref{lower}\end{tabular}}} & \multirow{2}{*}{\begin{tabular}[c]{@{}c@{}}$\tilde O(\frac{1}{\varepsilon^3})$\\ \Cref{MONO+LIP}, \ref{Lipupper}\end{tabular}} & \multicolumn{2}{c|}{\begin{tabular}[c]{@{}c@{}}Infinite \\ \Cref{impossibility}\end{tabular}}                                                                                                          \\ \cline{1-1} \cline{4-5} 
Right-Lipschitz             & \multicolumn{1}{c|}{}                                                                                                          &                                                                                                                         & \multicolumn{1}{c|}{\begin{tabular}[c]{@{}c@{}}$\Omega(\frac{1}{\varepsilon^3})$\\ \Cref{lower}\end{tabular}} & \begin{tabular}[c]{@{}c@{}}$\tilde{O}(\frac{1}{\varepsilon^3})$\\ \Cref{Lipupper}\end{tabular} \\ \hline
\end{tabular}
\end{center}
\end{table}


\subsection{Related Works}
The most related work is the pricing query complexity of revenue maximization by \cite{leme2023pricing,leme2023description}. They consider the problem of how many queries are required to find an approximately optimal reserve price in the posted pricing mechanism, which is a strictly special case of our model by assuming $g(\gamma,v)=\gamma$. Our work also relates to previous work about reserve price optimization in online learning settings, e.g., \cite{cesa2014regret,feng2021reserve}, who consider revenue maximization in the online learning setting, where the learner can control the reserve price at each round. Our work is loosely related to the sample complexity of revenue maximization (e.g., \cite{cole2014sample, morgenstern2015pseudo, gonczarowski2017efficient, guo2019settling, brustle_multi-item_2020}). They focus on learning near-optimal mechanisms, which lies in the PAC learning style framework. Whereas, our work characterizes the query complexity in the active learning scenario. 



For the online setting, our work is most related to the Lipschitz bandit problem, which was first studied by \cite{agrawal1995continuum}. Once we have Lipschitzness, there is a standard discretization method to get the desired upper bound and it is widely used in online settings. See \cite{magureanu2014lipschitz,kleinberg2019bandits,haghtalab2022smoothed}. The upper bound of our online results follows this standard method, but the lower bound relies on our offline results and is different from previous continuous-armed Lipschitz bandit problems. Several recent works study multi-armed bandit problems with censored feedback. 
For example, \cite{abernethy_threshold_2016} study a bandit problem where the learner obtains a reward of 1 if the realized sample associated with the pulled arm is larger than an exogenously given threshold.
\cite{verma_censored_2019} study a resource allocation problem where the learner obtains reward only if the resources allocated to an arm exceed a certain threshold.
In \cite{guinet_effective_2022}, the reward of each arm is missing with some probability at every pull.
These models are significantly different from ours, hence the results are not comparable. 

Censored/Truncated data are also widely studied in statistical analysis. For example, a classical problem is truncated linear regression, which has remained a challenge since the early works of \cite{tobin1958estimation,amemiya1973regression,breen1996regression}. Recently, \cite{pmlr-v99-daskalakis19a} provided a computationally and statistically efficient solution to this problem. Statistics problems with known or unknown truncated sets have received much attention recently (e.g. \cite{daskalakis2018efficient,kontonis2019efficient}). For more knowledge of this field, the reader can turn to the textbook of \cite{little_statistical_2020}). While this line of work studies passive learning settings where the censored dataset is given to the data analyst exogenously, we consider an active learning setting where the learner can choose how to censor each data point, and how to do that optimally.  

Technically, our work is inspired by a high-level idea called ``the needle in the haystack'', which was first proposed by \cite{auer1995gambling} and also occurred in recent works such as online learning about bilateral trade \cite{cesa2021regret, cesa2023bilateral}, first-price auction \cite{cesa2023role}, and graph feedback \cite{eldowa2023minimax}.  
Nevertheless, this idea is only high-level.  As we will show in the proofs, adopting this idea to prove our impossibility results and lower bounds is not straightforward and requires careful constructions. 

\section{Model}


We first define some notations.
For an integer $n$, $[n]$ denotes the set $\{1,2,...,n\}$. $\1_{(\cdot)}$ is the indicator function.
We slightly abuse notation to use $F$ to denote both a distribution and its cumulative distribution function (CDF). 

We study the query complexity of learning thresholds with latent values between a \emph{learner} and an \emph{agent}. The \emph{latent value} $v$ represents the agent's private value and is drawn from an \emph{unknown} distribution $F$ supported on $[0,1]$. In each query, the learner is allowed to choose a \emph{threshold} $\gamma \in [0,1]$; then with a fresh sample $v\sim F$, the learner gets reward feedback $b$ determined by the threshold $\gamma$ and the value $v$: 
\begin{equation}\label{eq:b-g-definition}
    b(\gamma, v)=
    \begin{cases}
        g(\gamma, v) & \text{if } v \ge \gamma\\
        0 & \text{if } v < \gamma
    \end{cases}
\end{equation}
where $g: [0,1]^2 \rightarrow [0,1]$ is an \emph{unknown reward function}.
The notation $b(\gamma) = b(\gamma, v)$ denotes a random reward with randomness $v\sim F$. 
The learner's goal is to learn an optimal threshold $\gamma^* \in [0,1]$ that maximizes its expected reward/utility $U(\gamma)$ defined as
\begin{equation}
    U(\gamma) \triangleq \E_{v\sim F}\big[ b(\gamma,v) \big] = \E_{v\sim F}\big[ g(\gamma,v)\cdot\1_{v\ge\gamma} \big].
\end{equation}
Typically, a higher threshold decreases the probability of getting a reward but gives a higher reward if the value exceeds the threshold.

Our model of learning optimal thresholds with latent value captures many interesting questions, as illustrated by the following examples.
\begin{example}[reserve price optimization]
A seller (learner) repeatedly sells a single item to a set of $n$ bidders. The seller first sets a reserve price (threshold) $\gamma$. Each
bidder $i$ then submits a bid $b_i$. The bidder with the highest bid larger than $\gamma$ wins the item and
pays their bid; if no bidder bids above $\gamma$, the item goes unallocated. Each bidder i has a private valuation $v_i\in[0,1]$ for the item, where each value $v_i$ is drawn independently (but not necessarily identically) from some unknown distribution.
 
If the seller adopts the first-price auction, only the highest bid matters for both allocation and payment. We denote the maximum value (latent value) $v^{(1)}\triangleq\max v_i$ and $v^{(1)}$ is drawn from a unknown distribution. Then we only consider this representative highest bidder (agent).\footnote{This reduction is proved to be without loss of generality in~\cite{feng2021reserve}.} The unknown bidding function (reward function) $g$ and $g(\gamma,v^{(1)})$ is the maximum bid when the reserve price is $\gamma$ and the maximum value is $v^{(1)}$.

If the seller adopts the second-price auction, only the second-highest value matters for both allocation and payment. 
We denote the second highest value (latent value) $v^{(2)}$ and $v^{(2)}$ is drawn from an unknown distribution. Similarly, we only need to consider the second-highest bidder (agent). Because bidders in the second price auction bid truthfully, it has a bidding function (reward function) $g$ with $g(\gamma,v^{(2)})=v^{(2)}$ for all $\gamma,v^{(2)}\in[0,1]$ and $v^{(2)}\ge\gamma$.

If the seller faces a single bidder (agent) and adopts a posted price auction, we have $g(\gamma,v)=\gamma$ as the bid when the reserve price is $\gamma$ and the bidder's value is $v$.
\end{example}

\begin{example}[crowdsourced data collection]
Data crowdsourcing platforms typically allow users (agent) to sign up and complete tasks in exchange for compensation. These tasks might involve answering questions, providing feedback, or rating products. The taskmasters (learner) need to decide how many questions should be included in each task or how detailed the feedback should be. We denote such difficulties (threshold) of the task $\gamma$. The users have individual willingness (latent value) $v$ to complete the task. The willingness follows a specific probability distribution, which is unknown to the taskmasters or platform. Typically, a more difficult task decreases the probability of getting feedback from the users but gives a higher reward if the users are willing to complete the tasks because more detailed information can be included when using a more difficult task. We use the notation $g(\gamma, v)$ to represent the reward when the difficulty of the task is $\gamma$ and the user's willingness is $v$.
\end{example}

\begin{example}[hiring bar]
A company (learner) sets a predefined bar (threshold) $\gamma$ for candidate admission. These candidates (agents) have individual measurements (latent value) $v$, which reflects their inherent ability. They will apply to the company if and only if they think of themselves as qualified, namely, $v \ge \gamma$. The measurements follow a specific probability distribution, which is unknown to the company. A candidate with a measurement $v$ admitted with a hiring bar $\gamma$ produces an output (reward) $g(\gamma,v)$ to the company.
\end{example}


We assume that the value distribution $F$ belongs to some class $\mathcal{C}$, and the reward function $g$ belongs to some class $\mathcal{G}$.  The classes $\mathcal{C}$ and $\mathcal{G}$ are known to the learner.  The learner makes $m$ queries adaptively and then outputs a threshold $\hat \gamma \in [0, 1]$ according to some algorithm $\mathcal{A}$, namely, $\gamma_{t} = \mathcal{A}(\gamma_1, b_1, \ldots, \gamma_{t-1}, b_{t-1})$, $b_t = b(\gamma_t, v_t)$, $\forall t\in[m]$, and $\hat{\gamma} = \mathcal A(\gamma_1, b_1, \ldots, \gamma_m, b_m)$. 

\begin{definition}[$(\eps,\delta)$-estimator]
\emph{An $(\eps,\delta)$-estimator (for $\mathcal C$ and $\mathcal G$)} is an algorithm $\mathcal{A}$ that, for any $F\in\mathcal{C}, g\in\mathcal{G}$, can output a $\hat \gamma$ satisfying $U(\hat{\gamma}) \ge U(\gamma^*) - \eps$ with probability at least $1-\delta$ using $m$ queries (where the randomness is from $v_1, \ldots, v_m\sim F$ and the internal randomness of $\mathcal A$).
\end{definition}

\begin{definition}[query complexity]
Given $\mathcal{C}$, $\mathcal{G}$, for any $\eps>0$ and $\delta\in(0,1)$, the \emph{query complexity} $\QUERYCPLX_{\mathcal{C},\mathcal{G}}(\eps,\delta)$ is the minimum integer $m$ for which there exists an $(\eps,\delta)$-estimator. 
\end{definition}

The query complexity depends on both the value distribution class $\mathcal{C}$ and the reward function class $\mathcal{G}$.
In this work, we will consider two natural classes of value distributions: (1) $\mathcal{C}_{\textsc{ALL}}$, the set of all distributions supported on $[0, 1]$; 
(2) $\mathcal{C}_{\textsc{LIP}}$, the set of distributions on $[0, 1]$ whose CDF $F$ is Lipschitz continuous. 
And we consider two types of reward functions: \emph{monotone} and \emph{right-Lipschitz} (with respect to $\gamma$). 
Specifically, for any fixed $v\in[0, 1]$, define \emph{projection} $g_v\triangleq g(\cdot, v)$, which is a one dimensional function of $\gamma \in [0, v]$.
Let $\mathcal{G}_{\textsc{MONO}}$ be the set of reward functions $g$ whose projection $g_v$ is weekly increasing (w.r.t.$\gamma$) for all $v \in [0, 1]$.
For the Lipschitzness, we define:
\begin{definition}[Lipschitzness]
A one dimensional function $f$ is
\begin{compactitem}
    \item \emph{($L$-)left-Lipschitz}, if for all $x, y\in\mathrm{dom}(f)$ with $x \le y$, \, $f(y) - f(x) \ge -L(y-x)$.
    \item \emph{($L$-)right-Lipschitz}, if for all $x, y\in\mathrm{dom}(f)$ with $x \le y$, \, $f(y) - f(x) \le L(y-x)$.
    \item \emph{one-sided ($L$-)Lipschitz}, if it is either ($L$-)left-Lipschitz or ($L-$)righ-Lipshitz.
    \item \emph{($L$-)Lipschitz}, if it is both left- and righ-Lipshitz.
\end{compactitem} 
\end{definition}
Let $\mathcal{G}_{\textsc{right-LIP}}$ ($\mathcal{G}_{\textsc{LIP}}$) be the set of reward functions $g$ whose projection $g_v$ is right-Lipschitz (Lipschitz) for all $v\in[0, 1]$. Monotonicity and right Lipschitzness are natural assumptions of the reward functions. In the above examples, the rewards (quality of the hire, revenue, and the quality of collected data) are all \emph{weakly increasing} with respect to the thresholds (hiring bar, reserve price, and the difficulty of the requirement). For right-Lipschitz functions, one can see \cite{duetting2023optimal} for some practical examples.

\section{Impossibility result: monotone reward function and general value distribution}\label{Neg}

In this section, we investigate the query complexity of learning the optimal threshold for general value distributions.
We show that even when the reward function $g$ is monotone with respect to both the threshold $\gamma$ and the latent value $v$, an $(\eps, \delta)$-estimator cannot be learned with finitely many queries, even for a constant $\eps = \frac{1}{8}$. 

\begin{theorem}\label{impossibility}
   For any $\delta \in (0,1)$ and $\eps \le \frac{1}{8}$, the query complexity $\QC_{\mathcal{G}_{\textsc{MONO}}, \mathcal{C}_{\textsc{ALL}}}(\eps, \delta)$ is infinite.
\end{theorem}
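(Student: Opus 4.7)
The plan is to exhibit a one-parameter family of instances $\{(F_c, g_c) : c \in \mathcal{S}\}$ with $\mathcal{S} = [1/4, 3/4)$, $F_c \in \mathcal{C}_{\textsc{ALL}}$ and $g_c \in \mathcal{G}_{\textsc{MONO}}$, realizing a ``needle in a haystack'': each expected utility $U_c$ has a sharply isolated unique maximizer at $\gamma^* = c$ separated by more than $2\eps$ from every other threshold, and the feedback alphabet is so small that no finite-query learner can output $\hat\gamma = c$ exactly for a continuum of possible needle locations.

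Concretely, I would take the two-point distribution $F_c = \frac{1}{4}\delta_1 + \frac{3}{4}\delta_c$ together with the reward
\[ g_c(\gamma, v) \;=\; \begin{cases} 1, & \gamma \ge c \text{ and } v \ge c,\\ \gamma, & \text{otherwise.}\end{cases} \]
A case check shows $g_c$ is weakly increasing in both $\gamma$ and $v$ (so in particular in $\mathcal{G}_{\textsc{MONO}}$, and also in the stronger monotone-in-both class asserted in the theorem's sketch). Plugging into $U_c(\gamma) = \frac{1}{4} g_c(\gamma, 1) + \frac{3}{4} g_c(\gamma, c)\,\1_{c \ge \gamma}$ yields, for $c < 1$, the three-piece profile $U_c(\gamma) = \gamma$ on $[0, c)$, $U_c(c) = 1$, and $U_c(\gamma) = 1/4$ on $(c, 1]$; hence $U_c(c) - \sup_{\gamma \ne c} U_c(\gamma) = 1 - \max(c, 1/4) = 1 - c > 1/4 = 2\eps$ throughout $\mathcal{S}$, which forces any $\eps$-optimal output to equal $c$ exactly.

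Next I would observe that for every query $\gamma_t$ the feedback $b_t = g_c(\gamma_t, v_t)\,\1_{v_t \ge \gamma_t}$ takes one of only three values: deterministically $\gamma_t$ when $\gamma_t < c$, deterministically $1$ when $\gamma_t = c$, and $1$ or $0$ (with probabilities $1/4$ and $3/4$) when $\gamma_t > c$. Conditioning on the algorithm's internal randomness $r$, the adaptive transcript after $m$ queries is therefore encoded by one of at most $3^m$ ternary patterns (each subsequent $\gamma_t$ is a deterministic function of the previous pattern and $r$), so the realized output $\hat\gamma(\cdot, r)$ takes values in a finite set of size at most $3^m$. Suppose towards contradiction that some $(\eps, \delta)$-estimator succeeds with $m < \infty$ queries, so $\mathbb{P}_c(\hat\gamma = c) \ge 1 - \delta$ for every $c \in \mathcal{S}$. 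Integrating this bound over $c \in \mathcal{S}$ and swapping by Fubini yields
\[ (1-\delta)\,|\mathcal{S}| \;\le\; \int_{\mathcal{S}} \mathbb{P}_c(\hat\gamma = c)\,dc \;=\; \mathbb{E}_r \sum_{O} \int_{\mathcal{S}} \1_{\hat\gamma(O, r) = c}\,\mathbb{P}_c(O)\,dc \;=\; 0, \]
since the sum over observation histories $O$ is finite for each fixed $r$ and each integrand vanishes Lebesgue-a.e.\ in $c$. The contradiction proves the theorem.

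The main obstacle in executing this plan is respecting monotonicity of $g_c$ in \emph{both} coordinates while still injecting a discontinuity large enough to carve out a $2\eps$-needle: an isolated pointwise bump at $(c, c)$ immediately violates monotonicity in $v$ along the row $\gamma = c$, which forces the bump to be lifted to the whole corner $\{\gamma \ge c,\,v \ge c\}$; the two-point $F_c$ is then engineered so that this lift contributes to $U_c$ only via the atom at $v = c$, while the atom at $v = 1$ caps $U_c$ at $1/4$ on $(c, 1]$ and preserves the needle height uniformly across the family.
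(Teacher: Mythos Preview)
Your proof is correct and follows the same ``needle in a haystack'' strategy as the paper: build a continuum of instances in which the unique $\eps$-optimal threshold is a hidden parameter, while the feedback alphabet at each query is so restricted that a finite-query algorithm can only output finitely many distinct values. The instantiation, however, is noticeably simpler than the paper's. The paper uses a four-part value distribution (two point masses, a continuous segment, and another point mass) together with a four-case reward function, engineered so that $U_\alpha$ is flat and equal to $1/2$ on $(\tfrac12,\alpha)\cup(\alpha,1]$ with a spike at $\alpha$; the feedback lies in $\{0,\gamma,\tfrac58,1\}$. Your two-atom distribution and two-case reward achieve a larger gap ($>\tfrac14$ versus $>\tfrac18$) with feedback in $\{0,\gamma_t,1\}$, and your verification of monotonicity in both coordinates is more transparent. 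Your concluding Fubini/measure-zero argument is also more rigorous than the paper's informal ``the learner cannot output the exact $\alpha$ from feedback in a finite set'' sentence; the paper relies on the reader to fill in essentially the same counting step you spell out. What the paper's construction buys is a more literal ``haystack'' picture (a genuinely flat plateau on both sides of the needle), which may be pedagogically useful, but it is not needed for the proof.
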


\paragraph{High-Level Ideas}
To prove the theorem, we carefully construct a set of pairs of monotone reward function and value distribution $(g_{\alpha}, F_{\alpha}) \in \+G_{\textsc{MONO}} \times  \+C_{\textsc{ALL}}$, parameterized by $\alpha \in (\frac{1}{2}, \frac{9}{16})$, such that that the utility function $U(\gamma)$ has a unique maximum point at $\gamma^* = \alpha$ and $U(\gamma) < U(\alpha) - \varepsilon$ for any $\gamma \ne \alpha$. To learn an $\varepsilon$-approximately optimal threshold, the learner must find the point $\gamma^* = \alpha$.
But there are infinitely many possible values for $\alpha \in (\frac{1}{2}, \frac{9}{16})$, and our construction ensures that the learner cannot determine the exact value of $\alpha$ from the feedback of finitely many queries. 


\begin{proof}
Fix any $\alpha\in(\frac{1}{2},\frac{9}{16})$. We define value distribution $F_{\alpha}$ with the following CDF: 
\begin{equation*}
F_{\alpha}(v)=
    \begin{cases}
        0 & v \in [0, \frac{1}{2})\\
        \frac{1}{2}-\frac{3}{16v} & v \in [\frac{1}{2},\alpha)\\
        \frac{1}{2} & v \in [\alpha, 1)\\
        1  & v=1. \\
    \end{cases}
\end{equation*}   
More specifically, $F_\alpha$ consists of the following four parts: (1) A point mass at $\frac{1}{2}$: $\sP(v = \frac{1}{2}) = \frac{1}{8}$; (2) Continuous CDF over the interval $(\frac{1}{2}, \alpha)$: $F_\alpha(x) = \frac{1}{2} - \frac{3}{16 x}$; (3) A point mass at $\alpha$: $\sP(v = \alpha) = \frac{3}{16\alpha}$; (4) A point mass at $1$: $\sP(v = 1) = \frac{1}{2}$.
Then we define a reward function $g_\alpha$ that is monotone with respect to both the threshold $\gamma$ and the latent value $v$:
\begin{equation*}
    g_{\alpha}(\gamma,v)=
    \begin{cases}
        \gamma     & \gamma \in [0, \alpha), v \in [0, \frac{15}{16}) \\
        v-\frac{3}{8}  & \gamma \in [0, \alpha), v \in [\frac{15}{16}, 1] \\
        \gamma & \gamma \in [\alpha, 1], v \in [0, \frac{15}{16}) \\
        v & \gamma \in [\alpha, 1], v \in [\frac{15}{16}, 1]. 
    \end{cases}
\end{equation*}
Now we can compute the expected utility  $U_{\alpha}(\gamma)$  when the learner chooses the threshold $\gamma \in [0,1]$ for the reward function $g_{\alpha}$ and value distribution $F_{\alpha}$. We have
\begin{compactitem}
    \item when $\gamma \in [0, \frac{1}{2}]$, the utility is $U_{\alpha}(\gamma) = \frac{1}{2}\gamma+\frac{5}{16}<\frac{9}{16}$; 
    \item when $\gamma \in (\frac{1}{2}, \alpha)$, the utility is $U_{\alpha}(\gamma) = (F(\alpha)-F(\gamma))\gamma+\frac{5}{16}=\frac{1}{2}$;
    \item when $\gamma = \alpha$, the utility is $U_{\alpha}(\gamma)=\alpha \cdot \sP[v=\alpha] + \frac{1}{2} = \frac{11}{16}$; 
    \item when $\gamma \in (\alpha, 1]$, the utility is $U_{\alpha}(\gamma)=\frac{1}{2}$. 
\end{compactitem}

Therefore, $U_{\alpha}(\gamma)$ is maximized at $\gamma^* = \alpha$, and for any $\gamma\neq\alpha$, it holds that $U_{\alpha}(\alpha) - U_{\alpha}(\gamma)> \frac{1}{8}$. To approximate the optimal utility within $\eps = \frac{1}{8}$ error, the learner must learn the exact value of $\alpha$. However, the following claim shows that when a learner chooses any threshold $\gamma \in [0,1]$, it only observes censored feedback in  $\{\gamma, \frac{5}{8}, 1, 0\}$.
\begin{claim}\label{claim:infinite lower bound}
    When the learner chooses $\gamma \in [0,1]$, it observes feedback in $\{ 0, \gamma, \frac{5}{8}, 1\}$.
\end{claim}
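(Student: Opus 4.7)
}
The plan is a direct case analysis on the value of $\gamma$ and the realization of $v$, using the explicit forms of $F_\alpha$ and $g_\alpha$ that were just constructed. First, I would identify the support of $F_\alpha$. Reading off the CDF, $F_\alpha$ has point masses at $\tfrac{1}{2}, \alpha, 1$ (with weights $\tfrac{1}{8}, \tfrac{3}{16\alpha}, \tfrac{1}{2}$) and a continuous part on $(\tfrac{1}{2},\alpha)$, so
\[
\mathrm{supp}(F_\alpha) \;\subseteq\; [\tfrac{1}{2},\alpha]\,\cup\,\{1\}.
\]
The key numerical fact I would record up front is that since $\alpha \in (\tfrac{1}{2}, \tfrac{9}{16})$, every realized $v$ in $[\tfrac{1}{2},\alpha]$ satisfies $v < \tfrac{15}{16}$, whereas the only realized $v \in [\tfrac{15}{16},1]$ is $v=1$ itself.

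Next I would split on whether $v < \gamma$ or $v \ge \gamma$. If $v < \gamma$, the definition of $b$ in \eqref{eq:b-g-definition} immediately gives $b = 0$. Otherwise $v \ge \gamma$ and $b = g_\alpha(\gamma,v)$, and I would sub-divide according to whether $v \in [\tfrac{1}{2},\alpha]$ or $v = 1$. In the first sub-case, $v < \tfrac{15}{16}$ by the remark above, so the relevant branches of $g_\alpha$ are those with $v\in[0,\tfrac{15}{16})$, both of which yield $g_\alpha(\gamma,v)=\gamma$; hence $b=\gamma$. In the second sub-case $v=1\in[\tfrac{15}{16},1]$, and the branch of $g_\alpha$ taken depends on whether $\gamma<\alpha$ or $\gamma\ge\alpha$: the former gives $b = v-\tfrac{3}{8} = \tfrac{5}{8}$, while the latter gives $b = v = 1$.

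Collecting the four possibilities across all $\gamma\in[0,1]$, the set of observable feedback values is contained in $\{0,\gamma,\tfrac{5}{8},1\}$, which is exactly the claim. There is no real obstacle here: the only thing to be careful about is to verify that $\alpha<\tfrac{15}{16}$ (so the $v\in[\tfrac{1}{2},\alpha]$ atoms and continuous piece never trigger the $v\ge\tfrac{15}{16}$ branch of $g_\alpha$), and to note that feedback of the form $v=1$ produces the two constants $\tfrac{5}{8}$ and $1$ that are independent of $\alpha$. These two observations together are what will ultimately be exploited in the subsequent information-theoretic argument, since none of the four feedback values carries any information that distinguishes different choices of $\alpha\in(\tfrac{1}{2},\tfrac{9}{16})$.
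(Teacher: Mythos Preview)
Your proposal is correct and follows essentially the same case analysis as the paper. The only cosmetic difference is the order of the split: the paper branches first on whether $\gamma<\alpha$ or $\gamma\ge\alpha$ and then on $v$, whereas you first isolate the support of $F_\alpha$ and branch on $v$, invoking the $\gamma$ versus $\alpha$ comparison only in the $v=1$ sub-case; both routes arrive at the same four feedback values.
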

\begin{proof}
    We consider two cases. If the learner chooses a threshold $\gamma < \alpha$, the learner only receives feedback in $\{ \frac{5}{8}, \gamma, 0\}$ depending on the latent value $v$: (1) if $v \ge \frac{15}{16}$, the learner receives $g_\alpha(\gamma, v) = v - \frac{3}{8} = 1 - \frac{3}{8} = \frac{5}{8}$; (2) if $\gamma \le v_t < \frac{15}{16}$, the learner receives $g_\alpha(\gamma, v) = \gamma$; (3) if $v < \gamma$, the learner receives $0$. Similarly, if the learner chooses a threshold $\gamma \ge \alpha$, the learner only receives feedback in $\{0,1, \gamma\}$: (1) if $v \ge \frac{15}{16}$, the learner receives $g_\alpha(\gamma, v) = v = 1$; (2) if $\gamma \le v < \frac{15}{16}$, the learner receives $g_\alpha(\gamma, v) = \gamma$; (3) if $v < \gamma$, the learner receives $0$.  This proves the claim.
\end{proof}
Note that the above results holds for all $\alpha \in (\frac{1}{2}, \frac{9}{16})$. By properties of $U_\alpha(\gamma)$ and \Cref{claim:infinite lower bound}, a learner is not able to output the exact value of $\alpha$ using finite queries with feedback only in $\{0, \gamma, \frac{5}{8}, 1\}$ against infinitely many pairs of $(F_\alpha, g_\alpha)$ for $\alpha \in (\frac{1}{2}, \frac{9}{16})$.  
\end{proof}

\section{Tight Query Complexity for the Lipschitz Cases}\label{Pos}
\subsection{Monotone reward function and Lipschitz value distribution}
The negative result in \Cref{Neg} implies that we need more assumptions on the reward function $g$ or the value distribution $F$ for the learner to learn the optimal threshold in finite queries. In this subsection, we keep assuming that $\mathcal{G}$ is the class of monotone functions w.r.t.~$\gamma$ and further assume that $\mathcal{C}$ is the class of Lipschitz distributions. 
We argue that the monotonicity of $g$ and the Lipschitzness of $F$  guarantee that the expected utility function $U$ is left-Lipschitz.

\begin{lemma}\label{mono}
    With $g\in \mathcal{G}_{\textsc{MONO}}$ and $F
    \in \mathcal{C}_{\textsc{LIP}}$, the expected utility function $U$ is left-Lipschitz.
\end{lemma}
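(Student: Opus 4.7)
The plan is to prove left-Lipschitzness directly from the definition. Fix any $\gamma_1 \le \gamma_2$ in $[0,1]$; I want to show $U(\gamma_2) - U(\gamma_1) \ge -L(\gamma_2 - \gamma_1)$, where $L$ is the Lipschitz constant of the CDF $F$. The idea is to split the event $\{v \ge \gamma_1\}$ into the two disjoint pieces $\{v \ge \gamma_2\}$ and $\{\gamma_1 \le v < \gamma_2\}$, which writes
\[
U(\gamma_1) = \E_v\bigl[g(\gamma_1,v)\,\1_{v \ge \gamma_2}\bigr] + \E_v\bigl[g(\gamma_1,v)\,\1_{\gamma_1 \le v < \gamma_2}\bigr],
\]
so that
\[
U(\gamma_2) - U(\gamma_1) = \E_v\bigl[(g(\gamma_2,v) - g(\gamma_1,v))\,\1_{v \ge \gamma_2}\bigr] - \E_v\bigl[g(\gamma_1,v)\,\1_{\gamma_1 \le v < \gamma_2}\bigr].
\]

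Next I would dispose of the two terms separately. For the first term, since $g \in \mathcal{G}_{\textsc{MONO}}$ means each projection $g_v$ is weakly increasing in $\gamma$, we have $g(\gamma_2,v) \ge g(\gamma_1,v)$ pointwise, so the first expectation is nonnegative. For the second term, I would use the boundedness $g(\gamma_1,v) \le 1$ together with the Lipschitzness of $F$ to bound
\[
\E_v\bigl[g(\gamma_1,v)\,\1_{\gamma_1 \le v < \gamma_2}\bigr] \le \Pr_v[\gamma_1 \le v < \gamma_2] = F(\gamma_2) - F(\gamma_1) \le L(\gamma_2 - \gamma_1).
\]
Combining the two gives $U(\gamma_2) - U(\gamma_1) \ge -L(\gamma_2 - \gamma_1)$, i.e., $U$ is $L$-left-Lipschitz.

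There is essentially no hard step here; the argument is just a careful bookkeeping of the censoring indicator. The only subtle point worth flagging is why we get \emph{only} left-Lipschitzness and not full Lipschitzness: the first expectation could be arbitrarily large if $g$ jumps upward sharply in $\gamma$, so we cannot expect a matching upper bound on $U(\gamma_2)-U(\gamma_1)$ under these hypotheses alone. That asymmetry is exactly what forces the one-sided conclusion of the lemma and motivates the Lipschitz assumption on $g$ used in the subsequent results.
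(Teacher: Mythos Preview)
Your proof is correct and follows essentially the same decomposition and two-term estimate as the paper: split off $\{v\ge\gamma_2\}$, use monotonicity of $g$ in $\gamma$ to kill the first term, and bound the second by $g\le 1$ together with $F(\gamma_2)-F(\gamma_1)\le L(\gamma_2-\gamma_1)$. The only cosmetic difference is that the paper first passes to a density $f\le L$ (via weak differentiability of the Lipschitz CDF) and writes the second term as $\int_{\gamma_1}^{\gamma_2} g(\gamma_1,v)f(v)\,dv$, whereas you bound $\Pr[\gamma_1\le v<\gamma_2]$ directly from the CDF; your route is marginally cleaner since it sidesteps the density altogether.
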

\begin{proof}
The distribution $F$ is weakly differentiable because it is Lipschitz. Let $f$ be the weak derivative of $F$.  We have $f(v)\le L$ for all $v\in[0,1]$ because of Lipschitzness.
We rewrite the expected utility $U(\gamma)=\E_{v\sim F}\big[ g(\gamma,v)\cdot\1_{v\ge\gamma} \big]$ as $\int_{v=\gamma}^1 g(\gamma,v) f(v) dv$.
Then for any $0\le \gamma_1 \le \gamma_2 \le 1$,
 \begin{equation*}
 \begin{aligned}
         U(\gamma_2)-U(\gamma_1) &= \int_{v=\gamma_2}^1g(\gamma_2, v)f(v)dv \,- \int_{v=\gamma_1}^1 g(\gamma_1,v)f(v)dv\\
         &=\int_{v=\gamma_2}^1 \big( g(\gamma_2,v) - g(\gamma_1,v)\big) f(v)dv \,-\int_{v=\gamma_1}^{\gamma_2}g(\gamma_1,v)f(v)dv\\
         &\ge 0 \,- \int_{v=\gamma_1}^{\gamma_2}g(\gamma_1,v)f(v)dv ~~~~~ \ge ~~~ -L(\gamma_2 - \gamma_1)\\
 \end{aligned}
 \end{equation*}
 where the first inequality holds because $g$ is monotone in $\gamma$, the second inequality holds because $g(\gamma,v)\le1$ and $f(v)\le L$ for all $\gamma,v\in[0,1]$.
\end{proof}

Then we show that $\Tilde{O}\InParentheses{\frac{L}{\eps^3}}$ queries are enough to learn the optimal threshold.
\begin{theorem}\label{MONO+LIP}
For $\mathcal{G}_{\textsc{MONO}}$ and $\mathcal{C}_{\textsc{LIP}}$, we have
\begin{equation*}
\QUERYCPLX_{\mathcal{G}_{\textsc{MONO}},\mathcal{C}_{\textsc{LIP}}}(\eps,\delta)\le O\InParentheses{\frac{L}{\eps^3}\log\frac{L}{\eps\delta}}. 
\end{equation*}
\end{theorem}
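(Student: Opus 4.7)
The plan is a standard discretize-and-estimate argument, leveraging \Cref{mono} (left-Lipschitzness of $U$) in a one-sided way. Let $h = \frac{\eps}{2L}$ and build the grid $G = \{0, h, 2h, \dots, \lfloor 1/h\rfloor\cdot h,\, 1\}$, of size $N = O(L/\eps)$. For each $\gamma\in G$, draw $m = \Theta\InParentheses{\frac{1}{\eps^2}\log\frac{N}{\delta}}$ fresh samples $v\sim F$, query the threshold $\gamma$, and form the empirical mean $\hat U(\gamma)$ of the observed rewards. Output $\hat\gamma \in \argmax_{\gamma\in G}\hat U(\gamma)$. Total queries: $Nm = O\InParentheses{\frac{L}{\eps^3}\log\frac{L}{\eps\delta}}$, matching the claimed bound.

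For correctness, first apply Hoeffding's inequality (the reward lies in $[0,1]$) together with a union bound over the $N$ grid points to conclude that, with probability at least $1-\delta$, $|\hat U(\gamma) - U(\gamma)| \le \eps/4$ for every $\gamma\in G$ simultaneously. Condition on this event. Next, let $\gamma^+$ be the smallest grid point with $\gamma^+ \ge \gamma^*$ (which exists because $1\in G$); by construction $\gamma^+ - \gamma^* \le h$. By \Cref{mono}, $U$ is $L$-left-Lipschitz, meaning $U(y) - U(x) \ge -L(y-x)$ for $x\le y$. Applying this with $x=\gamma^*,y=\gamma^+$ gives
\begin{equation*}
U(\gamma^+) \;\ge\; U(\gamma^*) - L(\gamma^+ - \gamma^*) \;\ge\; U(\gamma^*) - \tfrac{\eps}{2}.
\end{equation*}
Chaining the concentration bound and the optimality of $\hat\gamma$ on the grid:
\begin{equation*}
U(\hat\gamma) \;\ge\; \hat U(\hat\gamma) - \tfrac{\eps}{4} \;\ge\; \hat U(\gamma^+) - \tfrac{\eps}{4} \;\ge\; U(\gamma^+) - \tfrac{\eps}{2} \;\ge\; U(\gamma^*) - \eps,
\end{equation*}
which is the desired guarantee.

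There is no serious obstacle; the only subtle point, and the reason the argument is worth writing out carefully, is the \emph{one-sided} nature of the Lipschitz bound we have on $U$. Because \Cref{mono} only gives left-Lipschitzness, one must compare $\gamma^*$ against a grid point \emph{to its right} (i.e., $\gamma^+$) rather than the nearest grid point; comparing to a grid point on the left would control $U(\gamma_i) - U(\gamma^*)$ in the wrong direction and give no useful bound. Including the endpoint $1$ in $G$ ensures $\gamma^+$ is always defined, even when $\gamma^*$ lies in the last grid cell. Apart from this, the $\tilde O(L/\eps^3)$ rate is the familiar product of $O(L/\eps)$ grid points times $\tilde O(1/\eps^2)$ samples per point for Hoeffding concentration with a union bound, yielding the stated complexity.
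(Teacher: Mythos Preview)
Your proposal is correct and follows essentially the same approach as the paper: discretize $[0,1]$ with step $O(\eps/L)$, estimate $U$ at each grid point via concentration plus a union bound, and exploit the one-sided (left-)Lipschitzness of $U$ by comparing $\gamma^*$ to the nearest grid point on its \emph{right}. The only cosmetic difference is that the paper invokes the DKW inequality rather than Hoeffding for the per-point estimate, and ends with a $3\eps$ guarantee instead of tuning constants to get exactly $\eps$.
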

\begin{proof}
If the learner chooses a threshold $\gamma$ and makes $m$ queries with the same threshold $\gamma$, it will observe $m$ i.i.d.~samples $b_1(\gamma), b_2(\gamma), \ldots, b_m(\gamma)$. Here we use $b_i(\gamma)$ to denote the random variable $b(\gamma, v_i)$ where $v_i\sim F$. Let $G_{\gamma}$ be the CDF of the distribution of $b_i(\gamma)$. 
Let $\hat{G}_{\gamma}$ be the CDF of the empirical distribution: $\hat{G}_{\gamma}(x)=\sum_{i=1}^m\1_{b_i(\gamma)\le x}$. 
By the DKW inequality (Lemma~\ref{lem:DKW}), if the number of queries $m$ reaches $O(\frac{1}{\eps^2}\log\frac{1}{\delta'})$, then with probability at least $1-\delta'$ we have 
\begin{equation*}
    \InAbs{G_{\gamma}(x)-\hat{G}_{\gamma}(x)}\le\eps, ~~~~ \forall x\in\mathbb{R}.  
\end{equation*}
By Tonelli's theorem, $
    U(\gamma)=\E_{b(\gamma)\sim G_{\gamma}}[b(\gamma)]=\int_0^1\sP[b(\gamma)>t]dt=\int_0^1\big(1-G_{\gamma}(t)\big)dt$. 

Define $\hat{U}(\gamma)=\int_0^1\big(1-\hat{G}_{\gamma}(t)\big)dt$. 
Then we have 
\begin{equation}\label{single-est}
    \InAbs{U(\gamma)-\hat{U}(\gamma)}=\InAbs{\int_0^1\big(G_{\gamma}(t)-\hat{G}_{\gamma}(t)\big)dt} \le\eps. 
\end{equation}
This means that, after $O\InParentheses{\frac{1}{\eps^2}\log\frac{1}{\delta'}}$ queries at the same point $\gamma$, we can learn the expected utility of threshold $\gamma$ in $\eps$ additive error with probability at least $1-\delta'$.

For any $\eps>0$, let's consider all the multiples of $\frac{\eps}{L}$ in $[0,1]$, $\Gamma\triangleq\{\frac{k\eps}{L}:k\in\sN, k\le\frac{L}{\eps}\}$.
We make $O\InParentheses{\frac{1}{\eps^2}\log\frac{|\Gamma|}{\delta}}$ queries for each threshold $\gamma = \frac{k\eps}{L}$ in $\Gamma$.  By a union bound, we have with probability at least $1 - |\Gamma|\cdot\frac{\delta}{|\Gamma|} = 1-\delta$, the estimate $\hat U(\gamma)$ satisfies $|\hat U(\gamma) - U(\gamma)| \le \eps$ for every threshold in $\Gamma$.  Since $|\Gamma| = O(\frac{L}{\eps})$, this uses $|\Gamma| \cdot O\InParentheses{\frac{1}{\eps^2}\log\frac{|\Gamma|}{\delta}} = O\InParentheses{\frac{L}{\eps^3}\log\frac{L}{\eps\delta}}$ queries in total.  

Then, let $\gamma^* = \argmax_{\gamma\in[0, 1]} U(\gamma)$ be the optimal threshold in $[0, 1]$ for $U(\gamma)$ and $\hat{\gamma}^* = \argmax_{\gamma\in\Gamma}\hat{U}(\gamma)$ be the optimal threshold in $\Gamma$ for $\hat U(\gamma)$.
And let $\gamma_r=\frac{\eps}{L}\lceil\frac{L\gamma^*}{\eps}\rceil$, so $\gamma_r\in\Gamma$ and $0<\gamma_r-\gamma^*<\frac{\eps}{L}$.
Then, we have the following chain of inequalities: 
\begin{equation*}
\begin{aligned}
        U(\hat{\gamma}^*) & ~\stackrel{\text{\cref{single-est}}}{\ge}~ \hat{U}(\hat{\gamma}^*)-\eps ~\stackrel{\text{Definition of $\hat{\gamma}^*$}}{\ge}~ \hat{U}(\gamma_r)-\eps 
        ~\stackrel{\text{\cref{single-est}}}{\ge}~ U(\gamma_r)-2\eps\\
        & \stackrel{\text{\cref{mono}}}{\ge} ~  U(\gamma^*)-2\eps-L(\gamma_r-\gamma^*) ~~~ \ge ~~~ U(\gamma^*)-3\eps. 
\end{aligned}
\end{equation*}
This means that we have obtained a $3\eps$-optimal threshold $\hat{\gamma}^*$. 
%
\end{proof}


\begin{remark}
The above algorithm needs to know the Lipschitz constant $L$.  In Appendix \ref{app:L} we provide another algorithm that does not need to know $L$ and has a better query complexity of $\tilde O\InParentheses{\frac{1}{\eps^3}}$.    
\end{remark}

\subsection{Right-Lipschitz reward function and general value distribution}
The argument in the previous section shows that the one-sided Lipschitzness of the expected utility function is crucial to learning an optimal threshold with $\tilde{O}\InParentheses{\frac{L}{\eps^3}}$ query complexity. In this section, we further show that the expected utility function is right-Lipschitz when $\mathcal{G}$ is the class of right-Lipschitz continuous reward function and $\mathcal{C}$ is the class of general distribution. 

\begin{lemma}\label{Liplemma}
    Suppose reward function $g\in\mathcal{G}_{\textsc{right-LIP}}$, then the expected reward function $U$ is right-Lipschitz continuous.
\end{lemma}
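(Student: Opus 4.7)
The plan is to verify the right-Lipschitz inequality $U(\gamma_2) - U(\gamma_1) \le L(\gamma_2 - \gamma_1)$ directly from the definition of $U$ for any $0 \le \gamma_1 \le \gamma_2 \le 1$. First I would expand both expectations and split the event $\{v \ge \gamma_1\}$ into the disjoint union $\{v \ge \gamma_2\} \sqcup \{\gamma_1 \le v < \gamma_2\}$. This rewrites the difference as
\[
U(\gamma_2) - U(\gamma_1) = \E_{v\sim F}\bigl[(g(\gamma_2, v) - g(\gamma_1, v)) \1_{v \ge \gamma_2}\bigr] \,-\, \E_{v\sim F}\bigl[g(\gamma_1, v) \1_{\gamma_1 \le v < \gamma_2}\bigr].
\]

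Next I would bound the two terms separately. For the first term, the assumption $g \in \mathcal{G}_{\textsc{right-LIP}}$ means the projection $g_v$ is $L$-right-Lipschitz for every fixed $v$, so $g(\gamma_2, v) - g(\gamma_1, v) \le L(\gamma_2 - \gamma_1)$ pointwise; taking expectations and using $\sP[v \ge \gamma_2] \le 1$ yields an upper bound of $L(\gamma_2 - \gamma_1)$. For the second term, since $g$ maps into $[0,1]$ we have $g(\gamma_1, v) \ge 0$ so the expectation is non-negative, and its negation contributes at most $0$. Combining the two bounds delivers exactly $U(\gamma_2) - U(\gamma_1) \le L(\gamma_2 - \gamma_1)$, which is the definition of right-Lipschitzness for $U$.

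I do not anticipate any real obstacle; the proof is a short one-sided computation. The point worth highlighting in contrast to \Cref{mono} is that no regularity of the distribution $F$ is required: the "mass lost'' term $\E[g(\gamma_1, v) \1_{\gamma_1 \le v < \gamma_2}]$ already carries the favorable sign (it can only decrease $U(\gamma_2) - U(\gamma_1)$), so atoms of $F$ inside $[\gamma_1, \gamma_2)$ cause no trouble. The only subtlety to flag is that right-Lipschitzness is a genuinely one-sided property here, and the argument relies crucially on $g \ge 0$ to discard the second expectation rather than on any symmetric Lipschitz control of $g$.
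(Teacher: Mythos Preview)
Your proposal is correct and follows essentially the same argument as the paper: the paper also splits $\{v \ge \gamma_1\}$ into $\{v \ge \gamma_2\}$ and $\{\gamma_1 \le v < \gamma_2\}$, drops the second (nonnegative) term using $g \ge 0$, and bounds the first via the right-Lipschitzness of $g_v$ together with $\sP[v \ge \gamma_2] \le 1$. Your additional remark that no regularity of $F$ is needed is accurate and a helpful point of contrast with \Cref{mono}.
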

The proof of \cref{Liplemma} is similar to \cref{mono} and we leave it to the appendix. Then we can use the same method as \cref{MONO+LIP} to provide an upper bound.
\begin{theorem}\label{Lipupper}
    For the class of right-Lipschitz reward functions $\mathcal{G}_{\textsc{right-LIP}}$ and general distributions $\mathcal{C}_{\textsc{ALL}}$, $\varepsilon > 0$, $\delta \in [0,1]$, we have
    \begin{equation*}
    \QUERYCPLX_{\mathcal{G}_{\textsc{LIP}},\mathcal{C}_{\textsc{LIP}}}(\eps, \delta) \le \QUERYCPLX_{\mathcal{G}_{\textsc{right-LIP}},\mathcal{C}_{\textsc{ALL}}}(\eps, \delta)\le O\InParentheses{\frac{L}{\eps^3}\log\frac{L}{\eps\delta}}.
    \end{equation*}
\end{theorem}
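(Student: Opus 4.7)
The plan is to follow essentially the same blueprint as the proof of Theorem \ref{MONO+LIP}, with two small adjustments: we invoke Lemma \ref{Liplemma} (instead of Lemma \ref{mono}) to establish one-sided Lipschitzness of $U$, and we round the unknown optimum $\gamma^*$ onto the discretization grid in the opposite direction. The first inequality $\QUERYCPLX_{\mathcal{G}_{\textsc{LIP}},\mathcal{C}_{\textsc{LIP}}}(\eps,\delta) \le \QUERYCPLX_{\mathcal{G}_{\textsc{right-LIP}},\mathcal{C}_{\textsc{ALL}}}(\eps,\delta)$ is immediate from the inclusions $\mathcal{G}_{\textsc{LIP}} \subseteq \mathcal{G}_{\textsc{right-LIP}}$ and $\mathcal{C}_{\textsc{LIP}} \subseteq \mathcal{C}_{\textsc{ALL}}$, so everything reduces to proving the upper bound in the right-Lipschitz setting.

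First I would apply Lemma \ref{Liplemma} to conclude that $U$ is $L$-right-Lipschitz whenever $g \in \mathcal{G}_{\textsc{right-LIP}}$. Next, I would reuse the estimation machinery from Theorem \ref{MONO+LIP} verbatim: at any fixed threshold $\gamma$, by the DKW inequality the empirical CDF $\hat G_\gamma$ is uniformly within $\eps$ of $G_\gamma$ after $O(\frac{1}{\eps^2}\log\frac{1}{\delta'})$ queries, and by Tonelli's theorem this yields an estimator $\hat U(\gamma)$ with $|\hat U(\gamma) - U(\gamma)| \le \eps$ with probability at least $1-\delta'$. Setting up the same grid $\Gamma = \{k\eps/L : k \in \sN, \, k \le L/\eps\}$ of size $|\Gamma| = O(L/\eps)$, taking $\delta' = \delta/|\Gamma|$, and union-bounding over $\Gamma$ gives simultaneous $\eps$-accurate estimates $\hat U(\gamma)$ for all $\gamma \in \Gamma$, at a total cost of $O\InParentheses{\frac{L}{\eps^3}\log\frac{L}{\eps\delta}}$ queries.

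The only step that genuinely differs from Theorem \ref{MONO+LIP} is the final comparison. Let $\hat\gamma^* = \argmax_{\gamma \in \Gamma} \hat U(\gamma)$ and $\gamma^* = \argmax_{\gamma \in [0,1]} U(\gamma)$. Because $U$ is now \emph{right}-Lipschitz rather than left-Lipschitz, I would round $\gamma^*$ \emph{down} to the grid: set $\gamma_l = \frac{\eps}{L}\floor{\frac{L\gamma^*}{\eps}} \in \Gamma$, so that $0 \le \gamma^* - \gamma_l < \eps/L$. Right-Lipschitzness then gives $U(\gamma^*) - U(\gamma_l) \le L(\gamma^* - \gamma_l) \le \eps$, and chaining with the uniform estimation error yields
\begin{equation*}
U(\hat\gamma^*) \ge \hat U(\hat\gamma^*) - \eps \ge \hat U(\gamma_l) - \eps \ge U(\gamma_l) - 2\eps \ge U(\gamma^*) - 3\eps,
\end{equation*}
which, after rescaling $\eps$ by a constant, produces the desired $(\eps,\delta)$-estimator within the claimed query budget.

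There is no real obstacle here; the theorem is essentially a drop-in adaptation of Theorem \ref{MONO+LIP} once Lemma \ref{Liplemma} is in hand. The only thing one has to be slightly careful about is the direction of rounding: left-Lipschitzness of $U$ lets one upper-bound $U(\gamma^*) - U(\gamma_r)$ for a grid point $\gamma_r \ge \gamma^*$, whereas right-Lipschitzness lets one upper-bound $U(\gamma^*) - U(\gamma_l)$ for a grid point $\gamma_l \le \gamma^*$, and it is this latter form that we need to combine with the uniform concentration bound.
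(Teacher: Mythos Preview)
Your proposal is correct and follows essentially the same approach as the paper's proof: invoke \cref{Liplemma} for right-Lipschitzness of $U$, reuse the DKW-based estimation and grid from \cref{MONO+LIP}, and round $\gamma^*$ \emph{down} to $\gamma_l$ so the right-Lipschitz bound applies. The first inequality via the inclusions $\mathcal{G}_{\textsc{LIP}}\subseteq\mathcal{G}_{\textsc{right-LIP}}$ and $\mathcal{C}_{\textsc{LIP}}\subseteq\mathcal{C}_{\textsc{ALL}}$ is exactly what the paper does as well.
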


\subsection{Lower Bound}
In this section, we prove that even if $\mathcal{G}$ is the class of reward functions that are both Lipschitz and monotone w.r.t.~$\gamma$ and $\mathcal{C}$ is the class of Lipschitz distributions, $\Omega(\frac{1}{\eps^3})$ queries are needed to learn the optimal utility within $\eps$ error. This is a uniformly matching lower bound for all the upper bounds in this paper.

\begin{theorem}\label{lower}
    For $\mathcal{G}=\mathcal{G}_{\textsc{LIP}}\cap\mathcal{G}_{\textsc{MONO}}$, $\mathcal{C}_{\textsc{LIP}}$, we have
    \begin{equation*}
        \QUERYCPLX_{\mathcal{G},\mathcal{C}_{\textsc{LIP}}}(\eps, \delta)\ge\Omega\InParentheses{\frac{1}{\eps^3}+\frac{1}{\eps^2}\log\frac{1}{\delta}}. 
    \end{equation*}
\end{theorem}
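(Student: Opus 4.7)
The plan is to establish the lower bound through a ``needle in a haystack'' construction with $K = \Theta(1/\eps)$ hard instances, combined with a two-hypothesis argument for the $\log(1/\delta)$ dependence. Relative to prior pricing lower bounds (Kleinberg--Leighton, Leme--Sivan), the novelty is that Lipschitzness of both $g$ and $F$ forbids placing point masses or discontinuities; the needle therefore has to be a smooth bump, which caps the per-query information gain at $O(\eps^2)$ and drives the third-power rate.

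For the construction I will fix the reward $g(\gamma, v) := \gamma$ (posted pricing, Lipschitz and monotone) and vary only $F$. The base distribution $F_0$ has density $1/(4v^2)$ on $[1/4, 3/4]$ and a constant density on $[3/4, 1]$ chosen so that $F_0(1) = 1$; its density is bounded, so $F_0 \in \mathcal{C}_{\textsc{LIP}}$, and a short calculation gives $U_0(\gamma) = \gamma(1 - F_0(\gamma)) = 1/4$ for every $\gamma \in I := [1/4, 3/4]$, with $U_0 \le 1/4$ outside. I then place $K = \Theta(1/\eps)$ centers $\alpha_1, \dots, \alpha_K \in I$ spaced $C\eps$ apart for a large absolute constant $C$, let $W_i := [\alpha_i - C\eps/2, \alpha_i + C\eps/2]$, and fix a smooth bump $\phi \colon \R \to [0,1]$ supported on $[-1/2, 1/2]$ with $\phi(0)=1$. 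Define the perturbed CDF $F_i(\gamma) := F_0(\gamma) - c\eps\, \phi((\gamma - \alpha_i)/(C\eps))$ for a constant $c = \Theta(1)$. Choosing $c, C$ so that the derivative of the perturbation stays strictly below $f_0$ on $I$ keeps $F_i$ monotone with a universally Lipschitz CDF. Since $U_i(\gamma) = U_0(\gamma) + \gamma \cdot c\eps\, \phi((\gamma - \alpha_i)/(C\eps))$ on $I$, we obtain $U_i(\alpha_i) \ge 1/4 + 2\eps$ while $U_i(\gamma) \le 1/4$ for every $\gamma \notin W_i$. Any $(\eps, \delta)$-estimator on $\mathcal{I}_i := (g, F_i)$ must therefore output $\hat\gamma \in W_i$ with probability at least $1 - \delta$.

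For the $\Omega(1/\eps^3)$ term I use the standard KL-averaging argument. The feedback at $\gamma$ is a scaled Bernoulli with parameter $1 - F(\gamma)$, so queries at $\gamma \notin W_i$ have identical laws under $\mathcal{I}_0$ and $\mathcal{I}_i$, while queries at $\gamma \in W_i$ compare two Bernoullis whose parameters differ by $|F_0(\gamma) - F_i(\gamma)| = O(\eps)$ and are both bounded away from $\{0, 1\}$; the per-query KL divergence is therefore $O(\eps^2)$. Let $n_i$ denote the expected number of queries in $W_i$ when running the algorithm on $\mathcal{I}_0$. Disjointness of the windows gives $\sum_i n_i \le m$, so by averaging some $i^\star$ satisfies $n_{i^\star} \le m/K = O(m\eps)$; by the chain rule the transcript KL divergence satisfies $\KL(\mathcal{I}_0 \,\|\, \mathcal{I}_{i^\star}) = O(m\eps^3)$. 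Meanwhile an $(\eps, 1/4)$-estimator must hit $W_{i^\star}$ with probability $\ge 3/4$ under $\mathcal{I}_{i^\star}$ but with probability at most $1/K = O(\eps)$ on average over $i^\star$ under $\mathcal{I}_0$; the Bretagnolle--Huber inequality then forces $m\eps^3 = \Omega(1)$.

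The $\Omega(\eps^{-2}\log \delta^{-1})$ term follows by specializing to the two-instance family $\{\mathcal{I}_0, \mathcal{I}_1\}$ and applying the $\delta$-sensitive form of Bretagnolle--Huber (or Tsybakov's lemma), which yields $\KL \ge \Omega(\log(1/\delta))$ for any $\delta$-error test; comparing with $\KL \le m \cdot O(\eps^2)$ produces the bound. Running whichever of the two adversaries is harder for the given $(\eps, \delta)$ completes the proof. The main obstacle is entirely in the construction: previous pricing lower bounds place a point mass in $F$ at the needle location and exploit the resulting utility jump to get an $\Omega(1)$ reward gap, but Lipschitzness rules this out, so the needle has to be smeared into a smooth bump of width $\Theta(\eps)$ and height $\Theta(\eps)$. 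Balancing this height, the width, and the Lipschitz/monotonicity constraints on $F_i$ while ensuring $U_i(\alpha_i) - U_i(\gamma) \ge 2\eps$ outside the window is the delicate step, and the absolute constants $c$ and $C$ exist precisely to make everything compatible at once.
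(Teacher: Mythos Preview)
Your proposal is correct and follows essentially the same route as the paper: fix $g(\gamma,v)=\gamma$, build a Lipschitz base distribution with density $\propto 1/v^2$ so that $U_0(\gamma)=\gamma(1-F_0(\gamma))\equiv 1/4$ on a plateau, then perturb the CDF downward by a width-$\Theta(\eps)$, height-$\Theta(\eps)$ bump at one of $\Theta(1/\eps)$ disjoint locations, and use a per-query information bound of $O(\eps^2)$ on Bernoulli feedback. The differences are cosmetic: the paper uses a piecewise-linear tent perturbation and Hellinger distance on the plateau $[1/3,1/2]$, you use a smooth bump and KL/Bretagnolle--Huber on $[1/4,3/4]$.

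One small but genuine gap: for the $\Omega(\eps^{-2}\log\delta^{-1})$ term you test $\{\mathcal I_0,\mathcal I_1\}$, but under the \emph{base} instance $\mathcal I_0$ every $\gamma$ on the plateau is already $\eps$-optimal, so an $(\eps,\delta)$-estimator may simply output $\alpha_1\in W_1$ always and be correct under both hypotheses --- Bretagnolle--Huber then gives nothing. The fix is immediate and is implicit in the paper's argument as well: take two \emph{perturbed} instances $\mathcal I_1,\mathcal I_2$ with disjoint windows $W_1,W_2$, so that correctness forces $\hat\gamma\in W_1$ under one and $\hat\gamma\in W_2$ under the other; the per-query KL between them is still $O(\eps^2)$ (nonzero only in $W_1\cup W_2$), and the two-point bound now yields $m=\Omega(\eps^{-2}\log\delta^{-1})$. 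A second, more minor imprecision: in the $\Omega(1/\eps^3)$ step you select $i^\star$ to minimize $n_{i^\star}$ but then invoke the average bound $P_0(\hat\gamma\in W_{i^\star})\le 1/K$; to make this rigorous, pick $i^\star$ minimizing $n_i + m\cdot P_0(\hat\gamma\in W_i)$, whose average over $i$ is at most $2m/K$, so a single $i^\star$ satisfies both bounds up to a factor of~$2$.
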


\paragraph{High-level ideas}
To prove the theorem, we construct a Lipschitz value distribution and carefully perturb it on a $O(\eps)$ interval. The base value distribution leads to any $\gamma\in [\frac{1}{3},\frac{1}{2}]$ being the optimal threshold. However, the perturbed distribution leads to a unique optimal threshold $\gamma^*\in[\frac{1}{3},\frac{1}{2}]$. To learn an $\eps$-approximately optimal threshold, the learner needs to distinguish the base distribution and $O(\frac{1}{\eps})$ perturbed distributions. It can be proved that $\Omega\InParentheses{\frac{1}{\eps^2}\log\frac{1}{\delta}}$ queries are needed to distinguish the base distribution and one perturbed distribution, which leads to our desired lower bound.  This idea is significantly different from previous works that constructed discrete distributions (e.g., \cite{kleinberg2003value, leme2023pricing}).

\begin{proof}
    Let $g(\gamma,v)=\gamma$ for any $\gamma,v\in[0,1]$ such that $v\ge\gamma$. It is not difficult to verify that $g$ is Lipschitz continuous and monotone w.r.t.~$\gamma$. In this case, the expected utility can be written in a simple form $U(\gamma)=\gamma(1-F(\gamma))$.

    Consider the following Lipschitz continuous distribution $F_0$: 
    \begin{equation*}
    F_0(v)=
        \begin{cases}
            \frac{3}{4}v & v\in[0, \frac{1}{3})\\
            1-\frac{1}{4v} & v\in[\frac{1}{3}, \frac{1}{2}]\\
            v & v\in(\frac{1}{2}, 1]. 
        \end{cases}
    \end{equation*}

    This distribution leads to the following expected utility function:
    \begin{equation*}
    U_0(\gamma)=
        \begin{cases}
            \gamma(1-\frac{3}{4}\gamma) & \gamma\in[0,\frac{1}{3})\\
            \gamma(1-(1-\frac{1}{4\gamma}))=\frac{1}{4} & \gamma\in[\frac{1}{3},\frac{1}{2}]\\
            \gamma(1-\gamma) &\gamma\in(\frac{1}{2},1]. 
        \end{cases}
    \end{equation*}
    Because $\gamma(1-\frac{3}{4}\gamma)$ is increasing on interval $[0,\frac{1}{3})$ and $\gamma(1-\gamma)$ is decreasing on interval $(\frac{1}{2},1]$, $U_0(\gamma)<U_0(\frac{1}{3})=\frac{1}{4}$ when $\gamma\in[0,\frac{1}{3})$ and $U_0(\gamma)<U_0(\frac{1}{2})=\frac{1}{4}$ when $\gamma\in(\frac{1}{2},1]$.
    In other words, the expected utility function reaches maximum value if and only if $\gamma\in[\frac{1}{3},\frac{1}{2}]$. There is a "plateau" on the utility curve. (See Figure~\ref{fig:enter-label}.)

    Next, we perturb the distribution $F_0$ to obtain another distribution $F_{\omega, \eps}$, whose expected utility function will only have one maximum point rather than the interval $[\frac{1}{3},\frac{1}{2}]$.
    So to estimate the optimal utility, the learner must distinguish $F_0$ and a class of perturbed distributions. However, by carefully designing the perturbation, the difference between $F_0$ and perturbed distributions is small enough to lead to the desired lower bound.

    Let $\Xi=\{(w,\eps)\in[0,1]^2,w-3\eps \ge \frac{1}{3},w+3\eps\le\frac{1}{2}\}$.  For any $(w,\eps)\in\Xi$, let 
        $h_{w,\eps}(v)=\1_{v\in(w,w+3\eps]}-\1_{v\in[w-3\eps,w)}.$
    Notice that $F_0$ has the following probability distribution function $f_0$:
    \begin{equation*}
        f_0(v)=
        \begin{cases}
            \frac{3}{4} & v\in[0,\frac{1}{3})\\
            \frac{1}{4v^2} & v\in[\frac{1}{3},\frac{1}{2}]\\
            1 & v\in(\frac{1}{2},1].
        \end{cases}
    \end{equation*}
    So $f_0(v)\ge 1$ when $v\in[\frac{1}{3},\frac{1}{2}]$.  Define $f_{w,\eps}(v)=f_0(v)+h_{w,\eps}(v)$, then we have $f_{w,\eps}(v)\ge 0$ for any $v\in[0,1]$. And $\int_{v=0}^1f_{w,\eps}(v)=1$, so $f_{w,\eps}$ is a valid probability density function. Let $F_{w,\eps}$ be the corresponding cumulative distribution function. Note that $F_{w,\eps}$ is $L$-Lipschitz for any constant $L\ge\frac{13}{4}$. By definition, $F_{w,\eps}(v)=\int_0^v f_{\omega, \eps}(t) dt = \int_{0}^v\big(f_0(t)+h_{w,\eps}(t)\big)dt = F_0(v) + \int_{0}^vh_{w,\eps}(t)dt$. 
    \begin{claim}\label{claim:F_w_eps}(See proof in \Cref{app:proof:claim:F_w_eps}) The CDF function $F_{w,\varepsilon}$ is 
    \begin{equation*}
        F_{w,\eps}(v)=
        \begin{cases}
            F_0(v)-(v-w+3\eps) & v\in[w-3\eps,w)\\
            F_0(v)-(w+3\eps-v) & v\in[w,w+3\eps]\\
            F_0(v) & \mathrm{otherwise}. 
        \end{cases}
    \end{equation*}
    \end{claim}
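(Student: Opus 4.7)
The plan is to establish the identity $F_{w,\eps}(v) = F_0(v) + \int_0^v h_{w,\eps}(t)\,dt$ (already observed in the paper just before the claim) and then evaluate the integral $H(v) \triangleq \int_0^v h_{w,\eps}(t)\,dt$ by direct case analysis on where $v$ lies relative to the support $[w-3\eps,w+3\eps]$ of $h_{w,\eps}$.

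First I would note that by definition $h_{w,\eps}$ equals $-1$ on $[w-3\eps,w)$, $+1$ on $(w,w+3\eps]$, and $0$ elsewhere, and that $w-3\eps \ge \tfrac13 > 0$ so the integrand vanishes on $[0,w-3\eps)$. Splitting into four cases:
\begin{itemize}
\item If $v < w-3\eps$, then $H(v)=0$, so $F_{w,\eps}(v) = F_0(v)$.
\item If $v \in [w-3\eps, w)$, then $H(v) = \int_{w-3\eps}^v (-1)\,dt = -(v-w+3\eps)$.
\item If $v \in [w, w+3\eps]$, then $H(v) = -3\eps + \int_w^v 1\,dt = v-w-3\eps = -(w+3\eps-v)$.
\item If $v > w+3\eps$, then $H(v) = -3\eps + 3\eps = 0$.
\end{itemize}
Adding $F_0(v)$ to each of these expressions yields exactly the three cases in the claim (the last and first cases both give $F_{w,\eps}(v) = F_0(v)$, collapsing into the ``otherwise'' branch).

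There is essentially no obstacle here: the proof is a routine integration of a piecewise constant function, and the half-open/closed endpoint conventions in the definition of $h_{w,\eps}$ are harmless since they only affect the integrand on a measure-zero set. The only point that deserves a brief sentence is verifying consistency at the boundary $v=w$, where both the second and third cases give $H(w) = -3\eps$, so $F_{w,\eps}$ is continuous (as it must be, since it is the CDF of the absolutely continuous density $f_{w,\eps}$).
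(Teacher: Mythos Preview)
Your proposal is correct and follows essentially the same approach as the paper: both compute $F_{w,\eps}(v)-F_0(v)=\int_0^v h_{w,\eps}(t)\,dt$ by splitting into the four intervals determined by $w-3\eps$, $w$, and $w+3\eps$ and integrating the piecewise constant $h_{w,\eps}$. The only cosmetic difference is that for $v\in[w,w+3\eps]$ the paper writes the integral as $\int_{w-3\eps}^{w+3\eps}h_{w,\eps}-\int_v^{w+3\eps}h_{w,\eps}$ rather than your $-3\eps+\int_w^v 1\,dt$, which of course gives the same value.
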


\cref{claim:F_w_eps} means that $F_{w,\eps}$ and $F_0$ are nearly the same except on the $6\eps$-long interval $[w-3\eps,w+3\eps]$.

Let $U_{w,\eps}(\gamma)$ be the expected utility function when the agent's value distribution is $F_{w,\eps}$.
\begin{equation*}
    U_{w,\eps}(\gamma)-U_0(\gamma)=\gamma\big(F_0(\gamma)-F_{w,\eps}(\gamma)\big)=
    \begin{cases}
        \gamma(\gamma-(w-3\eps)) & \gamma\in[w-3\eps,w)\\
        \gamma(w+3\eps-\gamma) & \gamma\in[w,w+3\eps]\\
        0 & \mathrm{otherwise}. 
    \end{cases}
\end{equation*}
\begin{figure}
    \centering
    \includegraphics[width=60mm]{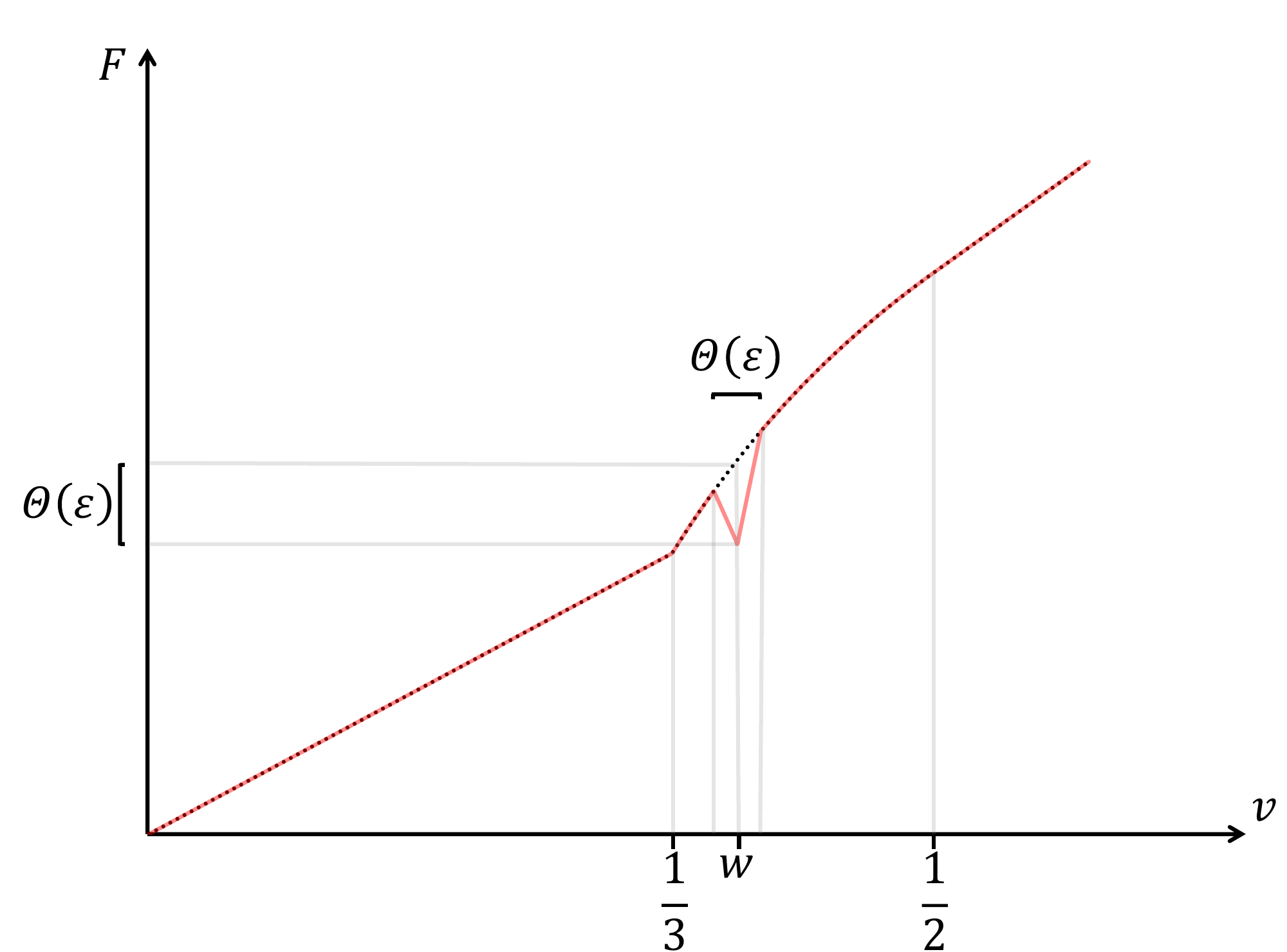}
    \includegraphics[width=60mm]{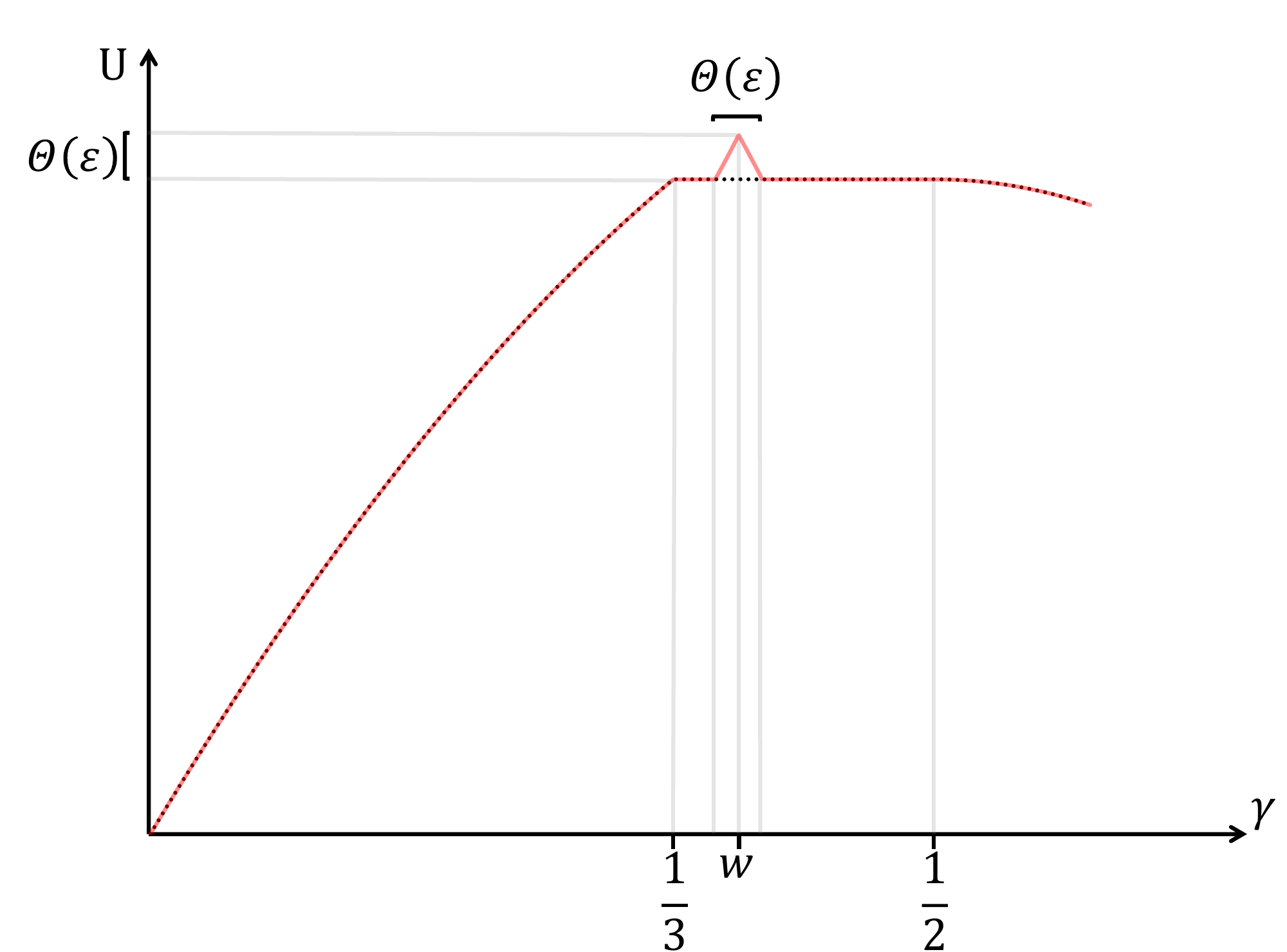}
    \caption{Left: The base distribution $F$ (black, dotted) and the perturbed distribution $F_{w,\gamma}$ (red solid), which moves one unit of mass from interval $[w-3\eps,w]$ to interval $[w,w+3\eps]$. Right: The corresponding qualitative plots
    of $\gamma\mapsto U(\gamma)$ (black, dotted) and $\gamma\mapsto U_{w,\gamma}(\gamma)$ (red, solid).}
    \label{fig:enter-label}
\end{figure}
$U_{w,\eps}$ has a unique maximum point at $\gamma = w$, with $U_{w,\eps}(w)=U_0(w)+w\eps=\frac{1}{4}+3w\eps>\frac{1}{4}+\eps$, because $\gamma(\gamma-(w-3\eps))$ is increasing in $[w-3\eps,w)$ and $\gamma(w+3\eps-\gamma)$ is decreasing in $[w,w+3\eps]$.

Let $w_i=\frac{1}{3}+3(2i-1)\eps, i\in\{1,2,...,n=\lfloor\frac{1}{36\eps}\rfloor\}$. Note that $(w_i, \eps)\in\Xi$ for all $i\in[n]$.
It can be proved that distinguishing distributions $F_0$ and $F_{w_i,\eps}$ requires $\Omega\InParentheses{\frac{1}{\eps^2}\log\frac{1}{\delta}}$ queries in the interval $[w_i-3\eps, w_i+3\eps]$ (see \Cref{app:dis}).  Queries not in $[w_i-3\eps, w_i+3\eps]$ do not help to distinguish $F_0$ and $F_{w_i,\eps}$ because their CDFs are the same outside of the range $[w_i-3\eps, w_i+3\eps]$. 
\begin{lemma}\label{dis}
For any $i\in[n]$, $\Omega\InParentheses{\frac{1}{\eps^2}\log\frac{1}{\delta}}$ queries in interval $[w_i-3\eps, w_i+3\eps]$ are needed to distinguish $F_0$ and $F_{w_i,\eps}$.
\end{lemma}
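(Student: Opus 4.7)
The plan is to reduce this to a Bernoulli hypothesis-testing lower bound and extract the $\log(1/\delta)$ factor via the Bretagnolle--Huber inequality rather than Pinsker. The key observation is that since $g(\gamma,v)=\gamma$ in the construction, a query at threshold $\gamma$ produces feedback in $\{0,\gamma\}$, so it is information-theoretically equivalent to a single Bernoulli bit $X_\gamma \sim \mathrm{Ber}(1-F(\gamma))$. Thus distinguishing $F_0$ from $F_{w_i,\eps}$ from $m$ adaptive queries $\gamma_1,\ldots,\gamma_m \in I_i := [w_i-3\eps, w_i+3\eps]$ is exactly the problem of distinguishing two product-like distributions on $\{0,1\}^m$ whose per-coordinate parameters depend on the (possibly adaptive) choice of $\gamma_t$.

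First, I would bound the per-query KL divergence. By \Cref{claim:F_w_eps}, $\sup_{\gamma\in I_i}\InAbs{F_0(\gamma)-F_{w_i,\eps}(\gamma)} \le 3\eps$, and since $I_i \subseteq [\tfrac13,\tfrac12]$, on this interval $F_0(\gamma)=1-\tfrac{1}{4\gamma}\in[\tfrac14,\tfrac12]$, so both success probabilities $1-F_0(\gamma)$ and $1-F_{w_i,\eps}(\gamma)$ lie in a compact subset of $(0,1)$ bounded away from the endpoints. A standard Bernoulli KL estimate $\KL(\mathrm{Ber}(p)\,\|\,\mathrm{Ber}(q)) \le \frac{(p-q)^2}{q(1-q)}$ then yields $\KL\bigl(\mathrm{Ber}(1-F_0(\gamma))\,\|\,\mathrm{Ber}(1-F_{w_i,\eps}(\gamma))\bigr) = O(\eps^2)$ uniformly in $\gamma\in I_i$.

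Second, I would pass from per-query to total KL using the chain rule for adaptive sampling. Letting $\mathbb{P}_0$ and $\mathbb{P}_i$ denote the laws of the full transcript $(\gamma_1,b_1,\ldots,\gamma_m,b_m)$ under $F_0$ and $F_{w_i,\eps}$ respectively, a standard sequential-KL decomposition (conditioning on the history $\mathcal{H}_{t-1}$ and noting that $\gamma_t$ is $\mathcal{H}_{t-1}$-measurable) gives $\KL(\mathbb{P}_0\,\|\,\mathbb{P}_i) \le \sum_{t=1}^m \E_{\mathbb{P}_0}\bigl[\KL(\mathrm{Ber}(1-F_0(\gamma_t))\,\|\,\mathrm{Ber}(1-F_{w_i,\eps}(\gamma_t)))\bigr] \le m\cdot O(\eps^2)$. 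Finally, applying Bretagnolle--Huber: any algorithm that correctly distinguishes $F_0$ and $F_{w_i,\eps}$ with failure probability at most $\delta$ on each side induces a binary test with total error $\le 2\delta$, hence $2\delta \ge \exp\bigl(-\KL(\mathbb{P}_0\,\|\,\mathbb{P}_i)\bigr) \ge \exp(-c\, m\eps^2)$, so $m \ge \Omega\bigl(\eps^{-2}\log(1/\delta)\bigr)$.

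The main subtlety is the adaptivity of the learner, since $\gamma_t$ may depend on past feedback and is not a fixed design point; this is what forces us to use the sequential chain rule and to take an expectation of the per-step KL under $\mathbb{P}_0$ rather than a pointwise sum. A second, more minor, technical point is the deliberate choice of Bretagnolle--Huber over Pinsker: the latter would only give $m=\Omega(\eps^{-2})$, whereas the exponential form of Bretagnolle--Huber is what yields the desired $\log(1/\delta)$ dependence. Everything else is bookkeeping.
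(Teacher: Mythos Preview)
Your argument is correct and arrives at the same bound, but the machinery differs from the paper's. The paper works with the squared Hellinger distance: it bounds the likelihood ratio of the two Bernoulli feedback laws pointwise by $1\pm O(\eps)$, invokes a ratio-to-Hellinger lemma to get $d_{\textsc H}^2\!\bigl(G_\gamma,G_\gamma^{w,\eps}\bigr)=O(\eps^2)$ uniformly in $\gamma\in I_i$, and then appeals to a generic ``distinguishing two distributions'' lemma of the form $m\ge \tfrac{1}{4d_{\textsc H}^2}\log\tfrac{1}{4\delta}$. You instead bound the per-query $\KL$ by $O(\eps^2)$ via the quadratic Bernoulli estimate, aggregate with the sequential chain rule, and finish with Bretagnolle--Huber. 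Both routes are standard and equally short; the main thing your version buys is an explicit treatment of adaptivity through the chain rule, whereas the paper's proof phrases the conclusion as ``the learner must do $\Omega(\eps^{-2}\log(1/\delta))$ queries at the same threshold $\gamma$'', which is somewhat informal for an adaptive learner (the uniform-in-$\gamma$ Hellinger bound does make it go through, but the paper does not spell out the tensorization step). One cosmetic point: the Bretagnolle--Huber inequality gives $\alpha+\beta\ge \tfrac12\exp(-\KL)$, so your ``$2\delta\ge \exp(-\KL)$'' is off by a harmless constant; it does not affect the $\Omega$-conclusion.
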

Now consider a setting where the underlying value distribution is $F_{w_i,\eps}$ for $i$ uniformly drawn from $\{1, 2, \ldots, n\}$. Finding out the optimal utility and the corresponding threshold is equivalent to finding out the underlying value distribution. On one side, fixing $\delta$, for each $i\in[n]$ we need $\Omega\InParentheses{\frac{1}{\eps^2}}$ queries to distinguish $F_{w_i,\eps}$ and $F_0$ and all these queries must lie in the interval $I_i=[w_i-3\eps,w_i+3\eps]$. Since these intervals $\{I_i\}_{i \in [n]}$ are disjoint, and the learner must make  $\Omega\InParentheses{\frac{1}{\eps^2}}$ queries in each interval $I_i$, this leads to $n\cdot \Omega\InParentheses{\frac{1}{\eps^2}} = \Omega\InParentheses{\frac{1}{\eps^3}}$ queries in total. On the other side, for any $\delta\in(0,1)$, $\Omega\InParentheses{\frac{1}{\eps^2}\log\frac{1}{\delta}}$ are needed to distinguish the underlying value distribution and other distributions with probability at least $1-\delta$. Combining these two lower bounds, $\Omega\InParentheses{\frac{1}{\eps^3}+\frac{1}{\eps^2}\log\frac{1}{\delta}}$ queries are needed to learn the optimal threshold in $\eps$ additive error with probability at least $1-\delta$.
\end{proof}


\begin{corollary}\label{cor}
$\QUERYCPLX_{\mathcal{G}_{\textsc{LIP}},\mathcal{C}_{\textsc{LIP}}}(\eps,\delta)\ge\Omega\InParentheses{\frac{1}{\eps^3}+\frac{1}{\eps^2}\log\frac{1}{\delta}}$, $\QUERYCPLX_{\mathcal{G}_{\textsc{MONO}}, \mathcal{C}_{\textsc{LIP}}}(\eps, \delta)\ge\Omega\InParentheses{\frac{1}{\eps^3}+\frac{1}{\eps^2}\log\frac{1}{\delta}}$. 
\end{corollary}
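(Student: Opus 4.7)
The plan is to observe that query complexity is anti-monotone in class containment: whenever $\mathcal G' \subseteq \mathcal G$ and $\mathcal C' \subseteq \mathcal C$, we have $\QUERYCPLX_{\mathcal G', \mathcal C'}(\eps, \delta) \le \QUERYCPLX_{\mathcal G, \mathcal C}(\eps, \delta)$, because any $(\eps,\delta)$-estimator that succeeds uniformly over the larger class must in particular succeed uniformly over the smaller one. Consequently, any lower bound proved for the intersection class $\mathcal G_{\textsc{LIP}} \cap \mathcal G_{\textsc{MONO}}$ automatically passes up to each of the two supersets $\mathcal G_{\textsc{LIP}}$ and $\mathcal G_{\textsc{MONO}}$ (with $\mathcal C_{\textsc{LIP}}$ held fixed on the distribution side).

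With this observation in hand, the corollary becomes a direct inheritance from \Cref{lower}. First I would record the two set inclusions $\mathcal G_{\textsc{LIP}} \cap \mathcal G_{\textsc{MONO}} \subseteq \mathcal G_{\textsc{LIP}}$ and $\mathcal G_{\textsc{LIP}} \cap \mathcal G_{\textsc{MONO}} \subseteq \mathcal G_{\textsc{MONO}}$. Applying the anti-monotonicity principle to each inclusion and invoking \Cref{lower} on the right-hand side then yields exactly $\QUERYCPLX_{\mathcal G_{\textsc{LIP}}, \mathcal C_{\textsc{LIP}}}(\eps, \delta) \ge \Omega\InParentheses{\frac{1}{\eps^3} + \frac{1}{\eps^2} \log \frac{1}{\delta}}$ and the analogous bound for $\mathcal G_{\textsc{MONO}}$. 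To make the reasoning fully self-contained, I would also reaffirm that the concrete hard instance constructed inside the proof of \Cref{lower}, namely the reward function $g(\gamma, v) = \gamma$ paired with the perturbed distributions $\{F_0\} \cup \{F_{w_i, \eps}\}_{i \in [n]}$, actually sits inside $\mathcal G_{\textsc{LIP}} \cap \mathcal G_{\textsc{MONO}}$ and $\mathcal C_{\textsc{LIP}}$: the reward function is $1$-Lipschitz in $\gamma$ and weakly increasing in $\gamma$, and the distributions were verified there to be $L$-Lipschitz for any $L \ge 13/4$.

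There is essentially no obstacle here; the corollary is a formal consequence of class inclusion together with the structural properties of the hard instance already built in \Cref{lower}. The only point to watch is the direction of the implication --- enlarging $\mathcal G$ can only \emph{increase} query complexity, so we genuinely need the hard class to be a subset of each target class, which is exactly what the two inclusions above provide. No new construction, no new calculation, and no new distinguishability argument is required beyond what \Cref{lower} and \Cref{dis} already supply.
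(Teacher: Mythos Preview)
Your proposal is correct and matches the paper's (implicit) reasoning: the corollary is stated without proof because it follows immediately from \Cref{lower} via exactly the class-inclusion/anti-monotonicity argument you give, using that the hard instance $g(\gamma,v)=\gamma$ and the distributions $F_0, F_{w_i,\eps}$ lie in $\mathcal G_{\textsc{LIP}}\cap\mathcal G_{\textsc{MONO}}$ and $\mathcal C_{\textsc{LIP}}$ respectively.
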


\section{Online learning}\label{online}
Previous sections studied the threshold learning problem in an offline query complexity model.  In this section, we consider an (adversarial) online learning model and show that the threshold learning problem in this setting can be solved with a tight $\Theta\InParentheses{T^{2/3}}$ regret in the Lipschitz case.  

\paragraph{Adversarial online threshold learning}
The learner and the agent interact for $T$ rounds.
At each round $t \in [T]$, the learner selects a threshold $\gamma_t\in[0, 1]$ and the agent realizes a value $v_t\sim F_t \in \+C$.
The learner then observes reward $b_t = b_t(\gamma_t, v_t) = g_t(\gamma_t, v_t)\1_{v_t\ge\gamma_t}$ as in \cref{eq:b-g-definition}, which depends on the unknown reward function $g_t \in \+G$ and the value $v_t$.
Unlike the query complexity model where the learner only cares about the quality of the final output threshold $\hat \gamma$, here, the learner cares about the total reward gained during the $T$ rounds: $\sum_{t=1}^T b_t(\gamma_t, v_t)$.  We compare this total reward against the best total reward the learner could have obtained using some fixed threshold in hindsight.  In other words, we measure the performance of the learner's algorithm $\+A$ by its \emph{reget}: 
\begin{equation}
    \mathrm{Reg}_T^{\+C, \+G}(\+A) = \sup_{\gamma\in[0,1]}\-E\InBrackets{\sum_{t=1}^Tb_t(\gamma,v_t)-\sum_{t=1}^Tb_t(\gamma_t,v_t)}.
\end{equation}
Unlike the query complexity model where the value distribution $F$ and reward function $g$ are fixed, in this online learning model we allow them to change over time. Moreover, they can be controlled by an \emph{adaptive adversary} who, given classes $\+C$ and $\+G$, at each round $t$ can arbitrarily choose an $F_t \in \+C$ and a $g_t \in \+G$ based on the history $(\gamma_1, v_1, \ldots, \gamma_{t-1}, v_{t-1})$. 

We have shown in \Cref{Neg} that learning an $\eps$-optimal threshold is impossible for monotone reward functions $\+G_{\textsc{MONO}}$ and general distributions $\+C_{\textsc{ALL}}$.  Similarly, it is impossible to obtain $o(1)$ regret in this case.  So, we focus on the two cases with finite query complexity: $\+S_1 = (\+G_{\textsc{right-LIP}}, \+C_{\textsc{ALL}})$, $\+S_2 = (\+G_{\textsc{MONO}}, \+C_{\textsc{LIP}})$. 
The following theorem shows that, for all these three cases, there exists a learning algorithm that achieves $\Theta(T^{2/3})$ regret, and this bound is tight. 

\begin{theorem}\label{noregret}
For the adversarial online threshold learning problem, there exists an algorithm $\+A$ that, for all the three environments $\+S_1 = (\+G_{\textsc{right-LIP}}, \+C_{\textsc{ALL}})$, $\+S_2 = (\+G_{\textsc{MONO}}, \+C_{\textsc{LIP}})$, achieves regret 
\[
\mathrm{Reg}_T^{\+S_i}(\+A) \leq O(T^{2/3}). 
\]
And for any algorithm $\+A$, there exists a fixed reward function $g \in \mathcal{G}_{\textsc{LIP}}\cap\mathcal{G}_{\textsc{MONO}}$ and a fixed distribution $F \in \mathcal{C}_{\textsc{LIP}}$ for which the regret of $\+A$ is at least $\mathrm{Reg}_T^{g, F}(\+A) \ge \Omega\InParentheses{T^{2/3}}$.
\end{theorem}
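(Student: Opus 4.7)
The plan is to prove the upper bound by discretizing the threshold space $[0,1]$ and running a standard adversarial bandit algorithm over the grid, and to prove the lower bound by reducing from the offline query complexity lower bound in \Cref{lower}.

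\textbf{Upper bound.} By \Cref{mono} and \Cref{Liplemma}, in both environments $\+S_1$ and $\+S_2$ the per-round expected utility function $U_t(\gamma) \triangleq \E_{v_t\sim F_t}[b_t(\gamma, v_t)]$ is one-sided $L$-Lipschitz in $\gamma$: right-Lipschitz for $\+S_1$ and left-Lipschitz for $\+S_2$. The plan is to discretize $[0,1]$ into the grid $\Gamma_K = \{k/K : 0 \le k \le K\}$ and run $\mathrm{EXP3}$ over the $K+1$ arms of $\Gamma_K$ with observed rewards $b_t \in [0,1]$. A standard pseudo-regret analysis gives $\sup_{\tilde\gamma\in\Gamma_K}\E[\sum_t b_t(\tilde\gamma, v_t) - b_t(\gamma_t, v_t)] \le O(\sqrt{TK\log K})$ against the best arm in $\Gamma_K$, valid even for an adaptive adversary, because the conditional expected reward of arm $\gamma$ at round $t$ is $U_t(\gamma)$, which is determined by the history before $\gamma_t$ is drawn. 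For any fixed comparator $\gamma \in [0,1]$, the one-sided Lipschitz property guarantees a consistent ``favorable'' grid neighbor $\tilde\gamma \in \Gamma_K$ (the one just above $\gamma$ in $\+S_2$, just below $\gamma$ in $\+S_1$) with $U_t(\gamma) - U_t(\tilde\gamma) \le L/K$ for every $t$, hence a $TL/K$ additive discretization error. Combining,
\[
\mathrm{Reg}_T^{\+S_i}(\+A) \;\le\; O(\sqrt{TK\log K}) + TL/K,
\]
and the choice $K = \Theta((T/\log T)^{1/3})$ yields $\tilde O(T^{2/3})$.

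\textbf{Lower bound.} Suppose an algorithm $\+A$ achieves $\mathrm{Reg}_T^{g, F}(\+A) \le R$ for every fixed $g \in \+G_{\textsc{LIP}}\cap\+G_{\textsc{MONO}}$ and $F \in \+C_{\textsc{LIP}}$. The plan is to convert $\+A$ into an $(\eps, 1/2)$-estimator for the offline problem of \Cref{lower}: run $\+A$ for $T$ rounds against the stochastic environment $(F, g)$ and output $\hat\gamma$ drawn uniformly at random from the queried thresholds $\{\gamma_1, \ldots, \gamma_T\}$. Using $\E[b_t(\gamma_t, v_t) \mid \gamma_t] = U(\gamma_t)$ and the regret bound,
\[
\E[U(\hat\gamma)] \;=\; \tfrac{1}{T}\E\InBrackets{\textstyle\sum_t U(\gamma_t)} \;\ge\; U(\gamma^*) - R/T,
\]
so Markov's inequality yields $U(\hat\gamma) \ge U(\gamma^*) - 2R/T$ with probability at least $1/2$. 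Setting $\eps = 2R/T$ and $\delta = 1/2$, \Cref{lower} forces $T \ge \Omega(1/\eps^3) = \Omega(T^3/R^3)$, i.e., $R \ge \Omega(T^{2/3})$.

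The main technical obstacle is verifying that the standard expected-regret analysis of $\mathrm{EXP3}$ extends to the adaptive-adversary model used here, in which $F_t$ and $g_t$ may depend on the past history $(\gamma_1, v_1, \ldots, \gamma_{t-1}, v_{t-1})$. Because the conditional expected reward of each arm $\gamma \in \Gamma_K$ at round $t$, namely $U_t(\gamma)$, is determined by the history before $\gamma_t$ is sampled, and the comparator quantity $\sup_{\tilde\gamma}\E[\sum_t b_t(\tilde\gamma,v_t) - b_t(\gamma_t,v_t)]$ is exactly the pseudo-regret, the usual martingale-based proof of $\mathrm{EXP3}$ against adaptive adversaries applies verbatim. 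A secondary subtlety is that the ``favorable side'' of the grid for the one-sided Lipschitz comparator differs between $\+S_1$ and $\+S_2$; fixing the side consistently across rounds (matching the direction of the one-sided Lipschitzness guaranteed by \Cref{mono}/\Cref{Liplemma}) is what makes the comparator-by-comparator discretization bound valid.
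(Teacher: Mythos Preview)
Your approach is essentially the same as the paper's: discretize $[0,1]$, run an adversarial bandit algorithm on the grid, bound the discretization error via one-sided Lipschitzness (\Cref{mono}, \Cref{Liplemma}), and derive the lower bound by an online-to-batch reduction to \Cref{lower}. Your lower-bound argument is in fact spelled out more carefully than the paper's, which only cites ``a standard online-to-batch conversion.''

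One small discrepancy: you use $\mathrm{EXP3}$, whose regret is $O(\sqrt{TK\log K})$, and consequently obtain only $\tilde O(T^{2/3})$, whereas the theorem claims a clean $O(T^{2/3})$. The paper avoids the logarithmic factor by invoking the Poly~INF algorithm of Audibert--Bubeck, which achieves $O(\sqrt{TK})$ regret on $K$ arms; with $K=\Theta(T^{1/3})$ this gives $O(T^{2/3})$ exactly. Swapping $\mathrm{EXP3}$ for Poly~INF (or any minimax-optimal adversarial bandit algorithm) in your argument removes the log and matches the statement.
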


The proof idea is as follows. 
The lower bound $\Omega\InParentheses{T^{2/3}}$ follows from the $\Omega(\frac{1}{\eps^3})$ query complexity lower bound in \Cref{lower} by a standard online-to-batch conversion.
To prove the upper bound $O(T^{2/3})$, we note that under the three environments $\+S_1, \+S_2$, the expected reward function $U_t(\gamma) = \E_{v_t\sim F_t}[b(\gamma, v_t)]$ is one-sided Lipschitz in $\gamma$.
Therefore, we can treat the problem as a continuous-arm one-sided Lipschitz bandit problem, where each threshold $\gamma \in [0, 1]$ is an arm. This problem can be solved by discretizing the arm set and running a no-regret bandit algorithm for a finite arm set, e.g., Poly INF \cite{audibert_regret_2010}.  See details in \cref{app:proof:noregret}.

\bibliography{reference}
\bibliographystyle{plain}

\appendix
\section{Basic math}
\subsection{DKW Inequality}
The following well-known concentration inequality will be used in our proofs.
\begin{lemma}[Dvoretzky–Kiefer–Wolfowitz (DKW) inequality \cite{dvoretzky_asymptotic_1956, massart_tight_1990}]
\label{lem:DKW}
Let $X_1, X_2, \ldots, X_n$ be $n$ real-valued i.i.d random variables with cumulative distribution function $F$. Let $F_n = \sum_{i=1}^n\1_{X_i\le x}$ be the associated empirical distribution function. 
Then for all $\eps>0$,
\[
 \sP\InBrackets{\sup_{x\in\sR}\big(F_n(x)-F(x)\big)>\eps}<2e^{-2n\eps^2}. 
\]
\end{lemma}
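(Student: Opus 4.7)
The plan is a standard two-step pipeline: reduce to uniforms, then control the supremum via a concentration-plus-discretization argument. First I would apply the probability integral transform $U_i = F(X_i)$. When $F$ is continuous the $U_i$ are i.i.d.~$\mathrm{Uniform}[0,1]$; atoms of $F$ are handled by an auxiliary independent randomization inside each atom. Under the substitution $u = F(x)$ one has $F_n(x) - F(x) = \hat G_n(u) - u$, where $\hat G_n$ is the empirical CDF of the $U_i$. Since $u \mapsto \hat G_n(u) - u$ is right-continuous and strictly decreasing between its jumps, the supremum is attained at an order statistic, so $\sup_x (F_n(x) - F(x)) = \max_{k \in [n]} \big(k/n - U_{(k)}\big)$, reducing the problem to the maximum of $n$ correlated quantities.

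Next I would control this discrete maximum. For any fixed $u \in [0,1]$ we have $n \hat G_n(u) \sim \mathrm{Bin}(n, u)$, so Hoeffding's inequality yields the pointwise tail $\sP[\hat G_n(u) - u > \eps] \le e^{-2n\eps^2}$. A standard monotonicity trick (if the sup exceeds $\eps$ at some point then it already exceeds $\eps/2$ at a nearby $\eps$-net point) plus a union bound over $O(1/\eps)$ grid points gives a tail of the form $O(1/\eps)\, e^{-c n \eps^2}$. This crude bound already suffices for the uses of the lemma in this paper, since plugging it into the proofs of \cref{MONO+LIP,Lipupper} only inflates logarithmic factors that are anyway absorbed in the $\tilde O(\cdot)$ notation.

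The main obstacle is sharpening this crude tail to the clean $2\,e^{-2n\eps^2}$ stated in the lemma, with constant $2$ in the exponent and no $1/\eps$ prefactor. This is Massart's celebrated improvement over the original Dvoretzky--Kiefer--Wolfowitz bound, and its proof is substantially more delicate. The route I would follow is: represent the uniform order statistics via exponentials, $U_{(k)} \stackrel{d}{=} S_k / S_{n+1}$ with $S_k = E_1 + \cdots + E_k$ and $E_i \sim \mathrm{Exp}(1)$ i.i.d.; build a positive exponential supermartingale in the natural filtration of the $E_i$'s tuned so that a first-passage time for $\max_k(k/n - U_{(k)})$ corresponds to a martingale event; then apply Doob's maximal inequality. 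The tuning of the exponential tilt is the technical heart of Massart's argument, and for the purposes of this paper I would cite his result as a black box rather than reproduce that delicate optimization.
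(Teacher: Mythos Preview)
The paper does not prove this lemma at all: it is stated in the appendix as a known result with citations to \cite{dvoretzky_asymptotic_1956, massart_tight_1990} and used as a black box in the proof of \Cref{MONO+LIP}. Your proposal is a reasonable sketch of the standard route to Massart's sharp constant, and you correctly identify that the crude Hoeffding-plus-union-bound version would already suffice for the paper's applications (only logarithmic factors are affected), but none of this machinery appears in the paper. In that sense your final sentence---cite Massart as a black box---is exactly what the paper does, and is the appropriate disposition here.
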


\subsection{Properties of Hellinger Distance}
Then we review some useful properties of the Hellinger distance and total variation distance. First, the Hellinger distance gives upper bounds on the total variation distance:
\begin{fact}\label{HFact}
    Let $D_1$, $D_2$ be two distribution on $\+X$. Their total variance distance and Hellinger distance are $d_{\textsc{TV}}(D_1,D_2)$ and $d_{\textsc{H}}(D_1,D_2)$ respectively. We have
    \[
    1-d^2_{\textsc{TV}}(D_1,D_2)\ge(1-d^2_{\textsc{H}}(D_1,D_2))^2.
    \]
\end{fact}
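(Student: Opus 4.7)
The plan is to prove the inequality via the Bhattacharyya coefficient and a clever Cauchy--Schwarz decomposition. Write the two distributions in terms of densities $p_1, p_2$ with respect to a common dominating measure $\mu$ (e.g., $\mu = D_1 + D_2$). Recall the definitions
\[
d_{\textsc{TV}}(D_1,D_2) = \tfrac{1}{2}\int |p_1 - p_2|\, d\mu, \qquad d_{\textsc{H}}^2(D_1,D_2) = \tfrac{1}{2}\int (\sqrt{p_1}-\sqrt{p_2})^2\, d\mu.
\]
Expanding the Hellinger square, the right-hand side equals $1 - \rho$, where $\rho \triangleq \int \sqrt{p_1 p_2}\, d\mu$ is the Bhattacharyya coefficient. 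So the goal reduces to showing $\rho^2 \le 1 - d_{\textsc{TV}}^2$.

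The key step is to rewrite $\sqrt{p_1 p_2} = \sqrt{\min(p_1,p_2)}\cdot\sqrt{\max(p_1,p_2)}$ pointwise, which is valid since $\min \cdot \max = p_1 p_2$. Then Cauchy--Schwarz gives
\[
\rho^2 = \left(\int \sqrt{\min(p_1,p_2)}\cdot\sqrt{\max(p_1,p_2)}\, d\mu\right)^2 \le \int \min(p_1,p_2)\, d\mu \cdot \int \max(p_1,p_2)\, d\mu.
\]
Now I would invoke the standard identities $\int \min(p_1,p_2)\, d\mu = 1 - d_{\textsc{TV}}(D_1,D_2)$ and $\int \max(p_1,p_2)\, d\mu = 1 + d_{\textsc{TV}}(D_1,D_2)$, both of which follow by splitting the integral according to the sign of $p_1 - p_2$ and using $\max + \min = p_1 + p_2$, $\max - \min = |p_1 - p_2|$. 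Multiplying yields $\rho^2 \le 1 - d_{\textsc{TV}}^2$, and substituting $\rho = 1 - d_{\textsc{H}}^2$ gives exactly the stated inequality.

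There isn't really a hard obstacle here; it is a one-shot classical computation. The only nonobvious move is the Cauchy--Schwarz split using $\min$ and $\max$ rather than, say, applying Cauchy--Schwarz directly to $(\sqrt{p_1})(\sqrt{p_2})$ (which would only yield the trivial bound $\rho \le 1$). Everything else is bookkeeping around the standard density representations of $d_{\textsc{TV}}$ and $d_{\textsc{H}}$.
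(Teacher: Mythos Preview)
Your argument is correct and is the standard proof of this inequality. Note, however, that the paper does not actually prove this statement: it is recorded as a \emph{Fact} in the appendix on basic mathematical background and is simply cited as a known relation between total variation and Hellinger distances. So there is no ``paper's own proof'' to compare against; your Cauchy--Schwarz argument via the $\min/\max$ split and the Bhattacharyya coefficient is exactly the classical derivation one would give if asked to supply one.
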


The total variation distance has the following well-known property that upper bounds the
difference between the expected values of a function on two distributions:
\begin{fact}\label{Tfact}
    For any function $h:\+X\rightarrow[0,1],\InAbs{\E_{x\sim D_1}[h(x)]-\E_{x\sim D_2}[h(x)]}\le d_{\textsc{TV}}(D_1,D_2)$.
\end{fact}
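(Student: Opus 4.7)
The plan is to use the standard variational characterization of total variation distance, namely $d_{\textsc{TV}}(D_1,D_2) = \sup_{A\subseteq\+X} |D_1(A)-D_2(A)| = \tfrac{1}{2}\int_{\+X} |p_1(x)-p_2(x)|\,dx$ where $p_1,p_2$ are densities (or probability mass functions) of $D_1,D_2$ with respect to some common dominating measure. Given this, the quantity to bound is $\int_{\+X} h(x)\bigl(p_1(x)-p_2(x)\bigr)\,dx$, and the goal is to show its absolute value is at most $d_{\textsc{TV}}(D_1,D_2)$.

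The main step is to split the integral according to the sign of $p_1-p_2$. Define $A = \{x : p_1(x) \ge p_2(x)\}$ and $A^c = \{x : p_1(x) < p_2(x)\}$. Then
\begin{equation*}
\E_{D_1}[h] - \E_{D_2}[h] = \int_{A} h(x)\bigl(p_1(x)-p_2(x)\bigr)\,dx \;-\; \int_{A^c} h(x)\bigl(p_2(x)-p_1(x)\bigr)\,dx.
\end{equation*}
Since $h(x)\in[0,1]$ and both integrands are nonnegative after accounting for the sign, the first integral is at most $\int_{A}(p_1-p_2)\,dx = D_1(A)-D_2(A)$ and the second is nonnegative, giving the upper bound $D_1(A)-D_2(A) \le d_{\textsc{TV}}(D_1,D_2)$. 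Swapping the roles of $D_1$ and $D_2$ (or equivalently using $1-h$ in place of $h$ and noting that translation by a constant does not change the difference) yields the matching lower bound $-d_{\textsc{TV}}(D_1,D_2)$.

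There is really no main obstacle here; the only thing worth being careful about is stating which definition of $d_{\textsc{TV}}$ is being used and invoking the equivalence with the $L^1$ form. A slicker one-line alternative would be: since $h$ takes values in $[0,1]$, the function $h - \tfrac{1}{2}$ has range in $[-\tfrac{1}{2},\tfrac{1}{2}]$, so
\begin{equation*}
\bigl|\E_{D_1}[h] - \E_{D_2}[h]\bigr| = \Bigl|\int (h(x)-\tfrac{1}{2})(p_1(x)-p_2(x))\,dx\Bigr| \le \tfrac{1}{2}\int |p_1(x)-p_2(x)|\,dx = d_{\textsc{TV}}(D_1,D_2),
\end{equation*}
which compresses the case analysis into a single inequality. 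Either presentation gives a short, self-contained proof suitable for an appendix fact.
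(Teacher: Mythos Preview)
Your proof is correct; both the sign-splitting argument and the slicker $h-\tfrac{1}{2}$ centering trick are standard and valid. The paper itself does not prove this statement at all --- it is simply recorded as a well-known fact without proof --- so there is nothing to compare against.
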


Second, we use the following lemma to upper bound the squared Hellinger distance between two distributions that are close to each other. We use $D$ to specifically denote discrete distributions. We slightly abuse the notation by using $D$ to denote of PDF of distribution $D$.

\begin{lemma}\label{Hupperbound}
    Let $D_1$, $D_2$ be two distribution on $\+X$ satisfying $1-\eps\le\frac{D_2(x)}{D_1(x)}\le1+\eps$ for all $x\in\+X$. Then $d^2_{\textsc{H}}(D_1,D_2)\le\frac{1}{2}\eps^2$.
\end{lemma}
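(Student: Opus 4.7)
The plan is to reduce everything to a pointwise inequality on the likelihood ratio $r(x) = D_2(x)/D_1(x)$. Starting from the definition of squared Hellinger distance on a discrete support $\+X$,
\begin{equation*}
d^2_{\textsc{H}}(D_1, D_2) = \frac{1}{2}\sum_{x \in \+X} \bigl(\sqrt{D_1(x)} - \sqrt{D_2(x)}\bigr)^2,
\end{equation*}
I would factor $\sqrt{D_1(x)}$ out of each summand to rewrite this as
\begin{equation*}
d^2_{\textsc{H}}(D_1, D_2) = \frac{1}{2}\sum_{x \in \+X} D_1(x)\,\bigl(1 - \sqrt{r(x)}\bigr)^2,
\end{equation*}
where by hypothesis $r(x) \in [1-\eps, 1+\eps]$ for every $x$.

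The key step is then a deterministic, one-dimensional inequality: for every $r \in [1-\eps, 1+\eps]$, it holds that $(1 - \sqrt{r})^2 \le \eps^2$. I would prove this using the algebraic identity $(1 - \sqrt{r})(1 + \sqrt{r}) = 1 - r$, which yields $(1 - \sqrt{r})^2 = (1-r)^2/(1+\sqrt{r})^2$. On the low side $r = 1-\eps$ this gives $(1 - \sqrt{1-\eps})^2 = \eps^2/(1+\sqrt{1-\eps})^2 \le \eps^2$ (using only $\sqrt{1-\eps} \ge 0$), and on the high side $r = 1+\eps$ it gives the even stronger bound $(\sqrt{1+\eps} - 1)^2 \le \eps^2/4$. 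Since $(1-\sqrt{r})^2$ is monotone in $|r - 1|$ on each side of $r=1$, the bound $(1-\sqrt{r})^2 \le \eps^2$ extends to the whole interval.

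Substituting the pointwise bound into the rewritten Hellinger sum and using $\sum_{x} D_1(x) = 1$ gives immediately
\begin{equation*}
d^2_{\textsc{H}}(D_1, D_2) \le \frac{1}{2}\sum_{x \in \+X} D_1(x)\cdot \eps^2 = \frac{\eps^2}{2},
\end{equation*}
as required. There is no real obstacle in this argument; the only subtlety is to avoid a Taylor expansion of $\sqrt{1 \pm \eps}$ (which would work only for small $\eps$) and instead invoke the exact rationalization identity, so that the bound is uniform over all $\eps \in [0,1]$ and over arbitrary support sizes. The same proof goes through verbatim if $D_1, D_2$ are densities and the sum is replaced by an integral, but the lemma as stated only needs the discrete case.
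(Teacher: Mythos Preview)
Your proposal is correct and follows essentially the same approach as the paper: factor $D_1(x)$ out of the Hellinger sum, bound the pointwise term $(1-\sqrt{r(x)})^2$ by $\eps^2$ on the interval $r\in[1-\eps,1+\eps]$, and then use $\sum_x D_1(x)=1$. The only cosmetic difference is in how the pointwise bound is verified---the paper uses the elementary inequalities $\sqrt{1-\eps}\ge 1-\eps$ and $\sqrt{1+\eps}\le 1+\eps$ directly, whereas you use the rationalization $(1-\sqrt{r})^2=(1-r)^2/(1+\sqrt{r})^2$; both yield the same conclusion.
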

\begin{proof}
    By definition, 
    \[
    d^2_{\textsc{H}}(D_1,D_2)=\frac{1}{2}\sum_{x\in\+X}(\sqrt{D_1(x)}-\sqrt{D_2(x)})^2=\frac{1}{2}\sum_{x\in\+X}D_1(x)\InParentheses{1-\sqrt{\frac{D_2(x)}{D_1(x)}}}.
    \]
    If $D_2(x)<D_1(x)$, then we have $1-\sqrt{\frac{D_2(x)}{D_1(x)}}\le(1-\sqrt{1-\eps})^2\le\big(1-(1-\eps)\big)^2=\eps^2$. If $D_2(x)\ge D_1(x)$, we have $1-\sqrt{\frac{D_2(x)}{D_1(x)}}\le(\sqrt{1+\eps}-1)^2\le\big((1+\eps-1)\big)^2=\eps^2$. Combining these two cases, we have
    \[
    d^2_{\textsc{H}}(D_1,D_2)\le\frac{1}{2}\sum_{x\in\+X}D_1(x)\eps^2=\frac{1}{2}\eps^2.
    \]
\end{proof}

Finally, let $D^{\bigotimes m}$ denote the empirical distribution of $T$ i.i.d samples from $D$, namely, the product of $m$ independent $D$ distributions. We have the following lemma relates $d^2_{\textsc{H}}(D_1^{\bigotimes m}, D_2^{\bigotimes m})$ with $d^2_{\textsc{H}}(D_1,D_2)$.
\begin{lemma}\label{empirical}(\cite{Jasper2020note11})
    $d^2_{\textsc{H}}(D_1^{\bigotimes m}, D_2^{\bigotimes m})=1-\InParentheses{1-d^2_{\textsc{H}(D_1,D_2)}}^m\le m\cdot d^2_{\textsc{H}}(D_1,D_2)$.
\end{lemma}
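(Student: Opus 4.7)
The plan is to exploit the multiplicative structure of the Hellinger affinity, which tensorizes over product distributions, unlike the Hellinger distance itself. Define the Hellinger affinity between discrete distributions $P, Q$ on $\mathcal{X}$ by $A(P, Q) = \sum_{x \in \mathcal{X}} \sqrt{P(x) Q(x)}$. Expanding the definition of the Hellinger distance, one immediately checks the identity $d^2_{\textsc{H}}(P, Q) = \frac{1}{2} \sum_x \big(\sqrt{P(x)} - \sqrt{Q(x)}\big)^2 = 1 - A(P, Q)$, so equivalently $A(P, Q) = 1 - d^2_{\textsc{H}}(P, Q)$.

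The first main step is to show that the affinity factorizes over product distributions. For $D_1^{\bigotimes m}$ and $D_2^{\bigotimes m}$ on $\mathcal{X}^m$, a direct computation gives
\begin{equation*}
A(D_1^{\bigotimes m}, D_2^{\bigotimes m}) = \sum_{(x_1, \ldots, x_m) \in \mathcal{X}^m} \sqrt{\prod_{i=1}^m D_1(x_i) D_2(x_i)} = \prod_{i=1}^m \sum_{x_i \in \mathcal{X}} \sqrt{D_1(x_i) D_2(x_i)} = A(D_1, D_2)^m,
\end{equation*}
where the middle equality splits the product under the square root and distributes the sum. Substituting into $d^2_{\textsc{H}}(D_1^{\bigotimes m}, D_2^{\bigotimes m}) = 1 - A(D_1^{\bigotimes m}, D_2^{\bigotimes m})$ yields the claimed identity $d^2_{\textsc{H}}(D_1^{\bigotimes m}, D_2^{\bigotimes m}) = 1 - \big(1 - d^2_{\textsc{H}}(D_1, D_2)\big)^m$.

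For the inequality, apply Bernoulli's inequality: for any $x \in [0, 1]$ and any positive integer $m$, $(1 - x)^m \geq 1 - m x$ (a one-line induction on $m$). Setting $x = d^2_{\textsc{H}}(D_1, D_2) \in [0, 1]$ gives $1 - (1 - d^2_{\textsc{H}}(D_1, D_2))^m \leq m \cdot d^2_{\textsc{H}}(D_1, D_2)$, which completes the proof. There is no real obstacle here; the only substantive content is the tensorization of the affinity, which is a direct consequence of the square root of a product being the product of square roots, together with Fubini/distributivity of sums over product sets.
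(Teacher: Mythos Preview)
Your proof is correct and is the standard argument via tensorization of the Hellinger affinity together with Bernoulli's inequality. The paper does not actually supply a proof of this lemma---it merely cites an external reference---so there is nothing substantive to compare against, but your argument is exactly the one that reference would contain.
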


\subsection{Distinguishing distributions}
\label{app:distinguishing}
Let $D_1, D_2$ be two distributions over a discrete space $\+X$. A distribution $D_i$ is chosen uniformly from the set $\{D_1,D_2\}$. Then we are given $m$ samples from $D_i$ and want to distinguish whether the distribution is $D_1$ or $D_2$. It is known that at least $m=\Omega\InParentheses{\frac{1}{d^2_{\textsc{H}}(D_1,D_2)}\log\frac{1}{\delta}}$ samples are needed to guess correctly with probability at least $1-\delta$, no matter how we guess. Formally we have
\begin{lemma}\label{two}
    Let $j\in\{1,2\}$ be the index of the distribution we guess based on the samples. The probability of making a mistake when distinguishing $D_1$ and $D_2$ using $m$ samples, namely $\Pr[j\neq i]=\frac{1}{2}\Pr[j=2|i=1]+\frac{1}{2}\Pr[j=1|i=2]$, is at least
        \[
        \Pr[j\neq i]\ge\frac{1}{4}\InParentheses{1-d^2_{\textsc{H}}(D_1,D_2)}^{2m}\ge\frac{1}{4}e^{-4md^2_{\textsc{H}}(D_1,D_2)},
        \]
    if $d^2_{\textsc{H}}(D_1,D_2)\le\frac{1}{2}$. The inequality implies that, in order to achieve $\Pr[j\neq i]\le\delta$, we must have $m\ge\frac{1}{4d^2_{\textsc{H}}(D_1,D_2)}\log\frac{1}{4\delta}$.
\end{lemma}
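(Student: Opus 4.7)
The plan is to combine three ingredients already stated in the excerpt: the standard Neyman--Pearson-style lower bound on the probability of error in a binary hypothesis test in terms of the total variation distance, the Hellinger-to-TV conversion (Fact~\ref{HFact}), and the tensorization identity for squared Hellinger distance (Lemma~\ref{empirical}).

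\textbf{Step 1 (error-probability vs.\ TV).} For any deterministic decision rule $\phi$ mapping $m$ samples to a guess in $\{1,2\}$, let $A$ be the set of sample sequences on which $\phi$ outputs $1$. Then
\[
\Pr[j\neq i] \;=\; \tfrac{1}{2}\,D_2^{\bigotimes m}(A) + \tfrac{1}{2}\,D_1^{\bigotimes m}(A^c) \;=\; \tfrac{1}{2}\bigl(1 - (D_1^{\bigotimes m}(A) - D_2^{\bigotimes m}(A))\bigr) \;\geq\; \tfrac{1}{2}\bigl(1 - d_{\textsc{TV}}(D_1^{\bigotimes m}, D_2^{\bigotimes m})\bigr),
\]
because the maximum of $P(A)-Q(A)$ over measurable $A$ is exactly $d_{\textsc{TV}}(P,Q)$. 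Randomized rules reduce to a convex combination of deterministic ones, so the same bound applies.

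\textbf{Step 2 (TV to Hellinger and tensorize).} By Fact~\ref{HFact} applied to $D_1^{\bigotimes m}, D_2^{\bigotimes m}$, $1 - d_{\textsc{TV}}^2 \geq (1 - d_{\textsc{H}}^2)^2$. Factoring $1 - d_{\textsc{TV}}^2 = (1 - d_{\textsc{TV}})(1 + d_{\textsc{TV}}) \leq 2(1 - d_{\textsc{TV}})$ yields $1 - d_{\textsc{TV}} \geq \tfrac{1}{2}(1 - d_{\textsc{H}}^2(D_1^{\bigotimes m}, D_2^{\bigotimes m}))^2$. Plugging in and using the equality $1 - d_{\textsc{H}}^2(D_1^{\bigotimes m}, D_2^{\bigotimes m}) = (1 - d_{\textsc{H}}^2(D_1, D_2))^m$ from Lemma~\ref{empirical} gives
\[
\Pr[j \neq i] \;\geq\; \tfrac{1}{4}\bigl(1 - d_{\textsc{H}}^2(D_1^{\bigotimes m}, D_2^{\bigotimes m})\bigr)^2 \;=\; \tfrac{1}{4}\bigl(1 - d_{\textsc{H}}^2(D_1, D_2)\bigr)^{2m},
\]
which is the first claimed inequality.

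\textbf{Step 3 (exponential form and sample-complexity corollary).} For $x \in [0, \tfrac{1}{2}]$ one has $1 - x \geq e^{-2x}$ (since $\log(1-x) \geq -2x$ on this range), so with $x = d_{\textsc{H}}^2(D_1,D_2) \leq \tfrac{1}{2}$ we get $(1 - d_{\textsc{H}}^2)^{2m} \geq e^{-4 m\, d_{\textsc{H}}^2(D_1,D_2)}$, yielding the second inequality. To derive the final statement, set the right-hand side at most $\delta$: $\tfrac{1}{4} e^{-4 m\, d_{\textsc{H}}^2} \leq \delta$ rearranges to $m \geq \tfrac{1}{4\, d_{\textsc{H}}^2(D_1,D_2)}\log\tfrac{1}{4\delta}$, exactly as claimed.

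There is no real obstacle; the only point that requires mild care is Step~1, where one must argue the bound uniformly over all (possibly randomized) decision rules. This is standard: a randomized rule is a mixture of deterministic ones, and the error probability is linear in the mixing weights, so the infimum over rules equals the infimum over deterministic rules, which is achieved by the likelihood-ratio test and equals $\tfrac{1}{2}(1 - d_{\textsc{TV}})$. Everything else is a direct chain of the three cited facts.
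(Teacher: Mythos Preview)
Your proposal is correct and follows essentially the same route as the paper's proof: lower-bound the error probability by $\tfrac{1}{2}(1-d_{\textsc{TV}})$ on the product distributions (the paper phrases this via Fact~\ref{Tfact}, you via the definition of $d_{\textsc{TV}}$ directly), then convert TV to Hellinger using Fact~\ref{HFact} and the factoring $1-d_{\textsc{TV}}^2\le 2(1-d_{\textsc{TV}})$, tensorize via Lemma~\ref{empirical}, and finish with $1-x\ge e^{-2x}$ on $[0,\tfrac{1}{2}]$. The only cosmetic difference is that you make the treatment of randomized decision rules explicit, which the paper leaves implicit.
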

\begin{proof}
    The draw of $m$ samples from $D_1$ or $D_2$ is equivalent to the draw of one sample from $D_1^{\bigotimes m}$ or $D_2^{\bigotimes m}$. Given one sample from $D_1^{\bigotimes m}$ or $D_2^{\bigotimes m}$, the probability of making one mistake when guessing the distribution is at least
    \begin{equation}\label{distinguish}
    \begin{aligned}
        \Pr[j\neq i]&=\frac{1}{2}\Pr[j=2|i=1]+\frac{1}{2}\Pr[j=1|i=2]\\
                    &=\frac{1}{2}\InParentheses{1-\Pr[j=1|i=1]+\frac{1}{2}\Pr[j=1|i=2]}\\
                    &=\frac{1}{2}-\frac{1}{2}\InParentheses{\Pr[j=1|i=2]-\Pr[j=1]|i=1]}\\
                    &=\frac{1}{2}-\frac{1}{2}\InParentheses{\E_{D_1^{\bigotimes m}}[\1_{j=1}]-\E_{D_2^{\bigotimes m}}[\1_{j=1}]}\\
                    by \cref{Tfact}&\ge \frac{1}{2}-\frac{1}{2}d_{\textsc{TV}}\InParentheses{D_1^{\bigotimes m},D_2^{\bigotimes m}}.
    \end{aligned} 
    \end{equation}

    Then we upper bound $d_{\textsc{TV}}\InParentheses{D_1^{\bigotimes m}, D_2^{\bigotimes m}}$ to prove the lemma. According to \cref{HFact} and \cref{empirical}, we have
    \[
    1-d^2_{\textsc{TV}}\InParentheses{D_1^{\bigotimes m},D_2^{\bigotimes m}}\ge\InParentheses{1-d^2_{\textsc{H}}\InParentheses{D_1^{\bigotimes m},D_2^{\bigotimes m}}^2}=\InParentheses{1-d^2_{\textsc{H}}\InParentheses{D_1,D_2}}^{2m}.
    \]

    Since \[
    \begin{aligned}
           1-d^2_{\textsc{TV}}\InParentheses{D_1^{\bigotimes m},D_2^{\bigotimes m}}&=\InParentheses{1+d_{\textsc{TV}}\InParentheses{D_1^{\bigotimes m},D_2^{\bigotimes m}}}\InParentheses{1-d_{\textsc{TV}}\InParentheses{D_1^{\bigotimes m},D_2^{\bigotimes m}}}\\
           &\le2\InParentheses{1-d_{\textsc{TV}}\InParentheses{D_1^{\bigotimes m},D_2^{\bigotimes m}}}, \\
    \end{aligned}
    \]we have
    \[
        1-d_{\textsc{TV}}\InParentheses{D_1^{\bigotimes m},D_2^{\bigotimes m}}\ge\frac{1}{2}\InParentheses{1-d^2_{\textsc{H}}\InParentheses{D_1,D_2}}^{2m}.
    \]
    Plugging into \cref{distinguish}, we have
    \[
    \Pr[j\neq i]\ge\frac{1}{4}\InParentheses{1-d^2_{\textsc{H}}(D_1,D_2)}^{2m}.
    \]
    When $d^2_{\textsc{H}}(D_1,D_2)<\frac{1}{2}$, the inequality $1-x\ge e^{-2x}$ for all $x\in(0,\frac{1}{2})$ concludes that
    \[
    \Pr[j\neq i]\ge\frac{1}{4}e^{-4md^2_{\textsc{H}}(D_1,D_2)}.
    \]
 \end{proof}

\section{Missing Proofs from \texorpdfstring{\cref{Pos}}{Section 4}}
\subsection{Proof of \texorpdfstring{\cref{Liplemma}}{Lemma 4.2}}
\begin{proof}
    For any $0 \le \gamma_1 \le \gamma_2 \le 1$,
    \begin{equation*}
    \begin{aligned}
        U(\gamma_2) - U(\gamma_1) &= \E_{v\sim F}\InBrackets{g(\gamma_2,v) \cdot \1_{v\ge\gamma_2}} - \E_{v\sim F}[g(\gamma_1,v) \cdot \1_{v\ge\gamma_1}]\\
        &=\E_{v\sim F}[\big(g(\gamma_2,v) - g(\gamma_1,v)\big)\cdot\1_{v\ge\gamma_2}] - \E_{v\sim F}[g(\gamma_1,v)\cdot\1_{\gamma_1\le v < \gamma_2}]\\
        &\le\E_{v\sim F}[\big(g(\gamma_2,v) - g(\gamma_1,v)\big)\cdot\1_{v\ge\gamma_2}] ~~~~~ \le ~~~~ L(\gamma_2-\gamma_1)
    \end{aligned}
    \end{equation*}
    where the first inequality holds because $g(\gamma,v)\ge0$, the second inequality holds because the projection $g_v$ is right-Lipschitz continuous and the expectation $\E_{v\sim F}[\1_{v\ge\gamma_2}] \le 1$. 
\end{proof}

\subsection{Proof of \texorpdfstring{\cref{Lipupper}}{Theorem 4.2}}
\begin{proof}
 Because $\mathcal{C}_{\textsc{LIP}}\subset\mathcal{C}_{\textsc{ALL}}$, we have $\QUERYCPLX_{\mathcal{G}_{\textsc{LIP}},\mathcal{C}_{\textsc{LIP}}}(\eps,\delta) \le \QUERYCPLX_{\mathcal{G}_{\textsc{LIP}},\mathcal{C}_{\textsc{ALL}}}(\eps,\delta)$ for all $\eps>0$, $\delta\in(0,1)$.

    The expected utility $U$ is right-sided Lipschitz continuous according to \cref{Liplemma}. We can still consider all multiples of $\frac{\eps}{L}$ in $[0,1]$ and define $\hat{U}$ similarly.

    Let $\gamma^*\in\argmax_{\gamma\in[0,1]}U(\gamma)$ be an optimal threshold. And let $\hat{\gamma}^*\in\argmax_{\gamma\in\Gamma}$ be the optimal threshold on the discretized set.
And let $\gamma_l=\frac{\eps}{L}\lfloor\frac{L\gamma^*}{\eps}\rfloor$ respectively be the multiples of $\frac{\eps}{L}$ closest to the left. Then we have $\gamma_l\in\Gamma$ and $0<\gamma^*-\gamma_l<\eps$. Since the reward function g is right-$L$-Lipschitz-continuous, 
\begin{equation*}
\begin{aligned}
        U(\hat{\gamma}^*)&\ge\hat{U}(\hat{\gamma}^*)-\eps\ge\hat{U}(\gamma_l)-\eps\\
        &\ge U(\gamma_l)-2\eps\\
        &\ge U(\gamma^*)-2\eps-L(\gamma^*-\gamma_l)\\
        &\ge U(\gamma^*)-3\eps
\end{aligned}
\end{equation*}
where the first and third inequality holds because \cref{single-est}, the second inequality holds because the selection of $\hat{\gamma}^*$, the fourth inequality holds because of \cref{Liplemma}.    
\end{proof}
    
\subsection{Proof of \texorpdfstring{\cref{claim:F_w_eps}}{Claim 4.1}}
\label{app:proof:claim:F_w_eps}
    \begin{proof}
    For $v<w-3\eps$ and $v>w+3\eps$, $\int_{t=0}^vh_{w,\eps}(t)dt=0$ because $h_{w,\eps}(t)=0$ when $t<w-3\eps$ or $t>w+3\eps$ and $\int_{t=w-3\eps}^{w+3\eps} h_{w,\eps}(t)dt=3\eps-3\eps=0$. Therefore, for $v<w-3\eps$ and $v>w+3\eps$, $F_{w,\eps}(v)=F_0(v)$.
    For any $v\in[w-3\eps,w)$, 
    \begin{equation*}
        F_{w,\eps}(v)-F_0(v)=\int_{t=w-3\eps}^vh_{w,\eps}(t)dt=-(v-w+3\eps). 
    \end{equation*}
    And for any $v\in[w,w+3\eps]$,    \begin{equation*}
    \begin{aligned}
        F_{w,\eps}(v)-F_0(v)=\int_{w-3\eps}^vh_{w,\eps}(t)dt =\int_{w-3\eps}^{w+3\eps}h_{w,\eps}(t)dt-\int_{v}^{w+3\eps}h_{w,\eps}(t)dt=-(w+3\eps-v). 
    \end{aligned}
    \end{equation*}
    \end{proof}

\subsection{Proof of \texorpdfstring{\cref{dis}}{Lemma 4.3}}\label{app:dis}
\begin{proof}
When the learner sets different thresholds $\gamma$ and the value distribution is $F_0$, the samples come from different distributions $G_{\gamma}$. Similarly, when the learner sets different thresholds $\gamma$ and the value distribution is $F_{w,\eps}$, assume that the samples come from different distributions $G^{w,\eps}_{\gamma}$.

In order to distinguish $F_0$ and $F_{w,\eps}$, the learner must at least find a threshold $\gamma$ that it is able to distinguish $G_{\gamma}$ and $G^{w,\eps}_{\gamma}$.

Given $g(\gamma,v)=\gamma\cdot\1_{v\ge\gamma}$, $G_{\gamma}$ is a Bernoulli distribution that for all $X\sim G_{\gamma}$, $\Pr[X=0]=F_0(\gamma)$ and $\Pr[X=\gamma]=1-F_0(\gamma)$. And $G_{\gamma}^{w,\eps}$ is also a Bernoulli distribution that for all $Y\sim G_{\gamma}^{w,\eps}$, $\Pr[Y=0]=F_{w,\eps}(\gamma)$ and $\Pr[Y=\gamma]=1-F_{w,\eps}(\gamma)$.

Recall that \begin{equation*}
    F_{w,\eps}(v)=
    \begin{cases}
        F_0(v)-(v-w+3\eps) & v\in[w-3\eps,w)\\
        F_0(v)-(w+3\eps-v) & v\in[w,w+3\eps]\\
        F_0(v) & \text{otherwise}
    \end{cases}
\end{equation*}

When $\gamma<w-3\eps$ or $\gamma>w+3\eps$, $G_{\gamma}$ and $G^{w,\eps}_{\gamma}$ are the same Bernoulli distribution. The learner can't distinguish them. When $\gamma\in[w-3\eps,w+3\eps]$, we have $F_0(\gamma)\ge F_{w,\eps}(\gamma)\ge F_0(\gamma)-3\eps$ and $F_0(\frac{1}{2})\ge F_0(\gamma)\ge F_0(\frac{1}{3})$. Recall $F_0(\frac{1}{3})=\frac{1}{4}$ and $F_0(\frac{1}{2})=\frac{1}{2}$ .Then we get
\[
1\ge\frac{\Pr[Y=0]}{\Pr[X=0]}=\frac{F_{w,\eps}(\gamma)}{F_0(\gamma)}\ge\frac{F_0(\gamma)-3\eps}{F_0(\gamma)}=1-\frac{3\eps}{F_0(\gamma)}\ge 1-\frac{3\eps}{F_0(\frac{1}{3})}=1-12\eps.
\]

and 

\[
1\le\frac{\Pr[Y=\gamma]}{\Pr[X=\gamma]}=\frac{1-F_{w,\eps}(\gamma)}{1-F_0(\gamma)}\le\frac{1-F_0(\gamma)+3\eps}{1-F_0(\gamma)}=1+\frac{3\eps}{1-F_0(\gamma)}\le1+\frac{3\eps}{1-F_0(\frac{1}{2})}=1+6\eps.
\]

According to \cref{Hupperbound}, we have $d^2_{\textsc{H}}\InParentheses{G_{\gamma},G^{w,\eps}_{\gamma}}\le\frac{1}{2}(12\eps)^2\le72\eps^2$. Then we know from \cref{two} that $\Omega\InParentheses{\frac{1}{\eps^2}\log\frac{1}{\delta}}$ samples are needed to distinguish $G_{\gamma}$ and $G^{w,\eps}_{\gamma}$ with probability at least $1-\delta$.

In other words, the learner must at least find a threshold $\gamma\in[w-3\eps,w+3\eps]$ and do at least $\Omega\InParentheses{\frac{1}{\eps^2}\log\frac{1}{\delta}}$ queries at the same threshold $\gamma$ to distinguish $F_0$ and $F_{w,\eps}$ with probability at least $1-\delta$.
\end{proof}

\section{Proof of \texorpdfstring{\cref{noregret}}{Theorem 5.1}}
\label{app:proof:noregret} 
\begin{proof}
    First, because $v_t\sim F_t$ is independent of $\gamma_t$, we have $\E_{v_t\sim F_t}[b_t(\gamma_t, v_t)] = U_t(\gamma_t)$, and we can rewrite the regret as 
    \begin{equation*}
        \sup_{\gamma\in[0,1]}\-E\InBrackets{\sum_{t=1}^Tb_t(\gamma,v_t)-\sum_{t=1}^Tb_t(\gamma_t,v_t)} = \sup_{\gamma\in[0,1]} \bigg\{ \-E\Big[\sum_{t=1}^T U_t(\gamma)\Big] - \-E\Big[\sum_{t=1}^T U_t(\gamma_t)\Big] \bigg\}, 
    \end{equation*}
    where the expectation on the right-hand-side is only over the randomness of algorithm $\+A$ but not $v_t$. 

    We treat the online learning problem as a continuous-arm adversarial bandit problem, where each threshold $\gamma \in [0, 1]$ is an arm.
    According to \Cref{mono} and \Cref{Liplemma}, in all the three environments $\+S_1, \+S_2, \+S_3$ in the theorem the expected utility function $U_t(\gamma) = \E_{v_t\sim F_t}[b_t(\gamma, v_t)]$ is one-sided Lipschitz in $\gamma$.
    W.l.o.g, assume that $U_t$ is right-Lipschitz. Let's discretize the arm space $[0,1]$ uniformly with interval length $\eps$, obtaining a finite set of arms $\Gamma=\{0,\eps,2\eps,...\}$ with $\InAbs{\Gamma}\le\frac{1}{\eps}+1$. Let $\gamma^*\in\argmax_{\gamma\in[0,1]} \-E\big[ \sum_{t=1}^T U_t(\gamma) \big]$ be an optimal threshold in the interval $[0, 1]$ (for the expected sum of utility functions).  And let $\hat{\gamma}^*\in\argmax_{\gamma\in\Gamma} \-E\big[ \sum_{t=1}^T U_t(\gamma) \big]$ be an optimal threshold in the discretized set $\Gamma$.
    And let $\hat{\gamma}_l \in \Gamma$ be the largest multiple of $\eps$ that does not exceed $\gamma^*$.  Clearly, $\gamma^* - \hat{\gamma}_l \le \eps$.  Because every $U_t$ is right-Lipschitz, we have
    \begin{align*}
        \-E\Big[ \sum_{t=1}^T U_t(\gamma^*) \Big] - \-E\Big[ \sum_{t=1}^T U_t(\hat \gamma_l) \Big] \le \sum_{t=1}^T L(\gamma^* - \hat{\gamma}_l) \le TL \eps. 
    \end{align*}
    This implies that the optimal threshold $\hat{\gamma}^*$ in $\Gamma$ satisfies
    \begin{align*}
        \-E\Big[\sum_{t=1}^T U_t(\hat \gamma^*)\Big] \ge \-E\Big[ \sum_{t=1}^T U_t(\hat \gamma_l) \Big] \ge \-E\Big[ \sum_{t=1}^T U_t(\gamma^*) \Big] - TL \eps. 
    \end{align*}
    Recall that the Poly INF algorithm (Theorem 11 of \cite{audibert_regret_2010}) is an adversarial multi-armed bandit algorithm with $O(\sqrt{TK})$ regret when running on an arm set of size $K$.  If we run that algorithm on the arm set $\Gamma$, and let $\eps = T^{-1/3}$, then we get a total expected utility of at least
    \begin{align*}
        \-E\Big[ \sum_{t=1}^T U_t(\gamma_t) \Big] & \ge \-E\Big[ \sum_{t=1}^T U_t(\hat\gamma^*) \Big] - O(\sqrt{T|\Gamma|}) \\
        & \ge \-E\Big[ \sum_{t=1}^T U_t(\gamma^*) \Big] - TL \eps - O(\sqrt{T\tfrac{1}{\eps}}) \\
        & = \-E\Big[ \sum_{t=1}^T U_t(\gamma^*) \Big] - O(T^{2/3}). 
    \end{align*}
    So, the regret is at most $O(T^{2/3})$. 
\end{proof}

\section{Discussions on the Lipschitz constant L}
\label{app:L}
The proof of \cref{MONO+LIP} relies on knowing the Lipschitz constant $L$ and the sample complexity upper bound is $O\InParentheses{\frac{L}{\eps^3}\log\frac{1}{\eps\delta}}$. In this section, we adapt the proof of \cref{MONO+LIP} to cope with the case where the Lipschitz constant $L$ is unknown. We prove a new sample complexity upper bound $\bm {O\InParentheses{\frac{1}{\eps^3}\log\frac{L}{\eps}\log\frac{\log\frac{L}{\eps}}{\eps\delta}}}$, which improves the $\frac{L}{\eps^3}$ term to $\frac{1}{\eps^3}$ while has an additional $\log L$ terms compared with the case that we know $L$.

The function of set $\Gamma$ is to discretize on $[0,1]$ and ensure that the expected reward gap between any two adjacent points is at most $\eps$. However, if we scrutinize the proof of \cref{mono}, we have $U(\gamma_2)-U(\gamma_1)\ge -(F(\gamma_2)-F(\gamma_1))$ for any $0\le\gamma_1\le\gamma_2\le1$. Therefore, it is sufficient to discretize on $[0,1]$ and ensure that the CDF gap between any two adjacent points is at most $\eps$. Next, we show how to adaptively build a discretization set $\Gamma_A$ holding the above property without knowing $L$.

By Chernoff bound, we know $O\InParentheses{\frac{1}{\eps^2}\log\frac{1}{\delta}}$ queries are sufficient to learn $F(x)$ with $\frac{\eps}{9}$ additive error for any $x\in[0,1]$. At step 1, let $\Gamma_A=\{0\}$. At step $n>1$, assuming $\Gamma_A=\{x_1,x_2,...,x_{n-1}\}$and the estimations of corresponding CDF $\hat{F}(x_1),\hat{F}(x_2),...,\hat{F}(x_{n-1})$ are computed. Let $x_{n-1}=\max{\Gamma_A}$. Then we use binary search to find the next element $x_n$ satisfying $\frac{2\eps}{3}-\frac{\eps}{9}<\hat{F}(x_n)-\hat{F}(x_{n-1})<\frac{2\eps}{3}+\frac{\eps}{9}$ and hence $\frac{\eps}{3}<F(x_n)-F(x_{n-1})<\eps$. The binary search works because of the intrinsic monotonicity of empirical CDF function $\hat{F}$. Note that we always can find such $x_n$ within $O\InParentheses{\log \frac{L}{\eps}}$ points because of the Lipschitzness. Overall, to build $\Gamma_A$, we need to estimate $n=O\InParentheses{\log\frac{L}{\eps}\cdot\InAbs{\Gamma_A}}=O\InParentheses{\frac{1}{\eps}\log\frac{L}{\eps}}$ points of the value distribution. By union bound, to successfully build $\Gamma_A$ with probability $1-\frac{\delta}{2}$, the total query complexity is $O\InParentheses{n\cdot\frac{1}{\eps^2}\log\frac{n}{\delta}}=O\InParentheses{\frac{1}{\eps^3}\log\frac{L}{\eps}\log\frac{\log\frac{L}{\eps}}{\eps\delta}}$. Once $\Gamma_A$ is built, following the proof of \cref{MONO+LIP}, we have $O\InParentheses{\frac{1}{\eps^3}\log\frac{1}{\eps\delta}}$ queries are sufficient to learn the optimal threshold within $\eps$ additive error with probability at least $1-\frac{\delta}{2}$. Combining the above, $O\InParentheses{\frac{1}{\eps^3}\log\frac{L}{\eps}\log\frac{\log\frac{L}{\eps}}{\eps\delta}}$ queries are sufficient to build a $(\eps,\delta)$-estimator.

\section{Experimental results}
In this section, we provide some simple experiments to verify our theoretical results.
\subsection{Upper bound}
We consider two toy examples. The first example is $g(\gamma,v)=\gamma$ if $\gamma<\frac{1}{3}$ otherwise $g(\gamma,v)=v$ and the value distribution is the uniform distribution on $[0,1]$, which corresponds to the monotone reward function and Lipschitz value distribution. The second example is $g(\gamma,v)=\gamma$ and the value distribution is a point distribution that all the mass is on $v=\frac{1}{3}$, which corresponds to the Lipschitz reward function and general distribution case (\cref{Lipupper}). In our experiment, we first fix the number of queries to be $K^3$ where $K=100+3i$ for all integer $1\le i\le 33$, i.e. we choose $K$ from $[100,200]$. Therefore, the algorithm for the upper bound should output errors smaller than the predetermined loss $\frac{1}{K}$. The following figure shows the relationship between the loss and the number of queries. The empirical loss curve is under the predetermined loss curve, which verifies our upper bound results.
\begin{figure}[!h]
    \centering
    \includegraphics[width=65mm]{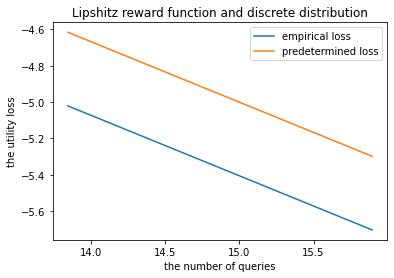}
    \includegraphics[width=65mm]{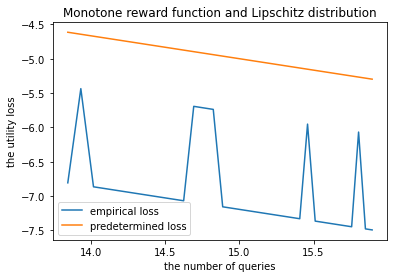}
    \caption{Loss curves under different examples. The orange line: the predetermined loss curve. The blue line: the empirical loss curve. All variables are in logarithmic form.}
    \label{fig:upper-bound experiment}
\end{figure}

\subsection{lower bound}
In this section, we provide experimental results to verify our lower bound result (\cref{lower}). We consider the example we provided in the proof of \cref{lower} where $g(\gamma,v)=\gamma$ and the value distribution is a ``hard distribution'' (see the left part of \cref{fig:lower-bound experiment}). For $\eps\in\{\frac{1}{400},\frac{1}{500},\frac{1}{600}\}$, we run the algorithm in \cref{MONO+LIP} to determine the minimum number $n$ of queries that are necessary to learn the optimal threshold with $\eps$ additive error. Due to randomness, we repeat 10 times for each $\eps$. At each round, we compute $\frac{\ln n}{\ln \frac{1}{\eps}}$ and find it converging to $3$, which verifies our lower bound result. 

\begin{figure}[t]
    \centering
    \includegraphics[width=65mm]{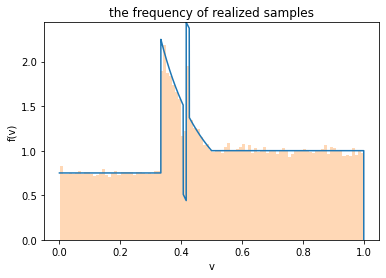}
    \includegraphics[width=65mm]{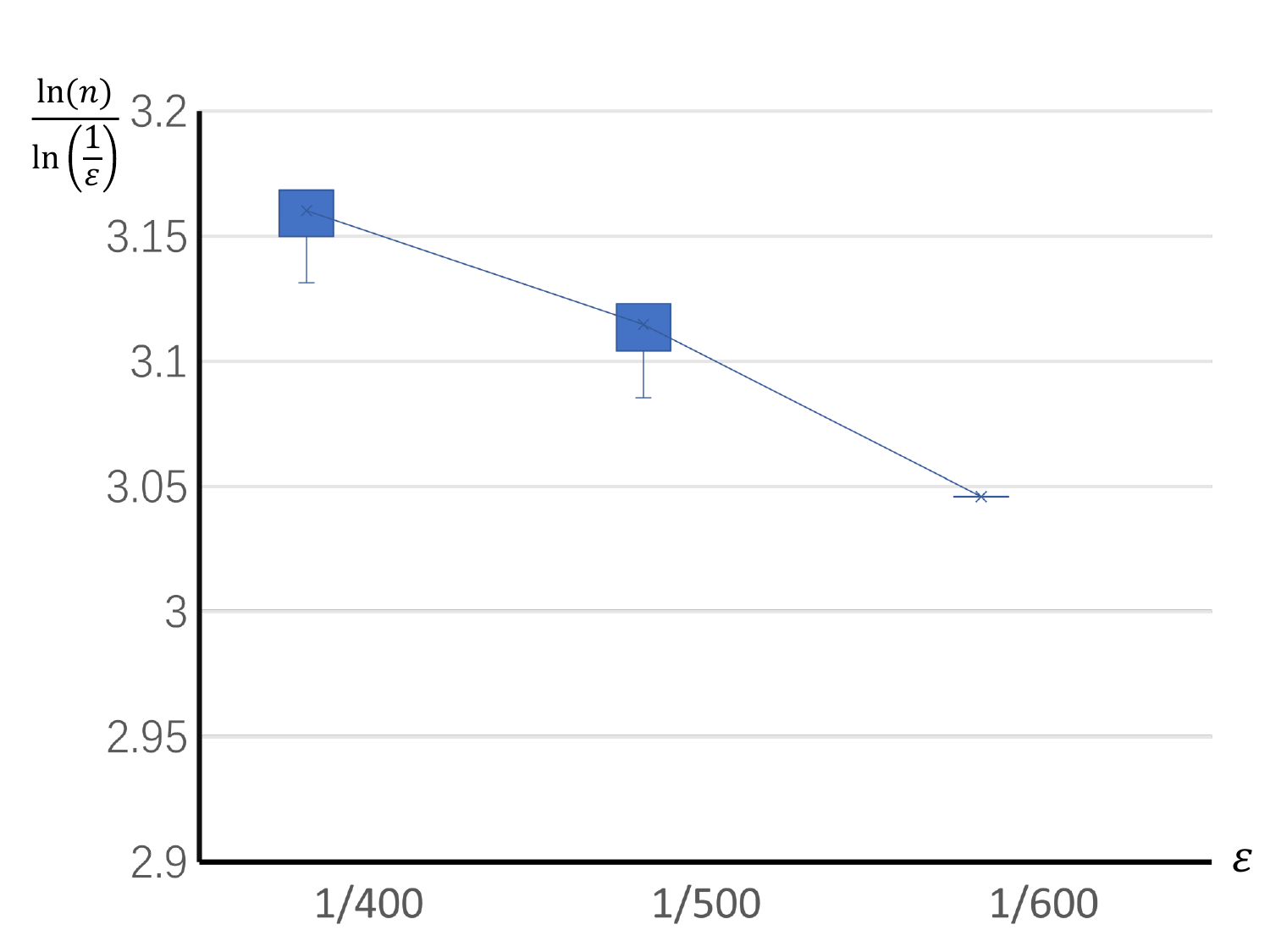}
    \caption{Left: The blue curve is the probability distribution function. The orange part is the frequency of realized samples. Right: The box plot when $\eps\in\{\frac{1}{400},\frac{1}{500},\frac{1}{600}\}$. The horizontal axe represents $\eps$. The vertical axe represents the logarithmic ratio $\frac{\ln n}{\ln \frac{1}{\eps}}$.}
    \label{fig:lower-bound experiment}
\end{figure}

\end{document}